\newtheorem{theorem}{Theorem}
\newtheorem{lemma}[theorem]{Lemma}
\newtheorem{definition}{Definition}
\begin{document}

\title{Federated Learning with Differential Privacy: An Utility-Enhanced Approach}

\author{Kanishka Ranaweera,  Dinh C. Nguyen,~\IEEEmembership{Member, ~IEEE, } Pubudu N. Pathirana,~\IEEEmembership{Senior Member, ~IEEE, } David Smith, Ming Ding,~\IEEEmembership{Senior Member, ~IEEE, } Thierry Rakotoarivelo,~\IEEEmembership{Senior Member, ~IEEE, } Aruna Seneviratne,~\IEEEmembership{Senior Member, ~IEEE }

\IEEEcompsocitemizethanks{\IEEEcompsocthanksitem Kanishka Ranaweera is with School of Engineering and Built Environment, Deakin University, Waurn
Ponds, VIC 3216, Australia, and also with the Data61, CSIRO, Eveleigh, NSW 2015, Australia. \protect\\
E-mail: kranaweera@deakin.edu.au
\IEEEcompsocthanksitem Dinh C. Nguyen is with the Department of Electrical and Computer Engineering,  The University of Alabama in Huntsville Alabama, USA. \protect\\
E-mail: Dinh.Nguyen@uah.edu
\IEEEcompsocthanksitem Pubudu N. Pathirana is with School of Engineering and Built Environment, Deakin University, Waurn Ponds, VIC 3216, Australia. \protect\\
E-mail: pubudu.pathirana@deakin.edu.au
\IEEEcompsocthanksitem David Smith is with Data61, CSIRO, Eveleigh, NSW 2015, Australia. \protect\\
E-mail: david.smith@data61.csiro.au
\IEEEcompsocthanksitem Ming Ding is with Data61, CSIRO, Eveleigh, NSW 2015, Australia. \protect\\
E-mail: ming.ding@data61.csiro.au
\IEEEcompsocthanksitem Thierry Rakotoarivelo is with Data61, CSIRO, Eveleigh, NSW 2015, Australia. \protect\\
E-mail: thierry.rakotoarivelo@data61.csiro.au
\IEEEcompsocthanksitem Aruna Seneviratne is with School of Electrical Engineering and Telecommunications, University of New South Wales (UNSW), NSW, Australia. \protect\\
E-mail: a.seneviratne@unsw.edu.au

}
}

\markboth{}%
{}

\IEEEtitleabstractindextext{%
\begin{abstract}
Federated learning has emerged as an attractive approach to protect data privacy by eliminating the need for sharing clients' data while reducing communication costs compared with centralized machine learning algorithms. 
However, 
recent studies have shown that federated learning alone does not guarantee privacy, 
as private data may still be inferred from the uploaded parameters to the central server. 
In order to successfully avoid data leakage, 
adopting differential privacy (DP) in the local optimization process or in the local update aggregation process has emerged as two feasible ways for achieving sample-level or user-level privacy guarantees respectively, in federated learning models. 
However, 
compared to their non-private equivalents, 
these approaches suffer from a poor utility. 
To improve the privacy-utility trade-off, 
we present a modification to these vanilla differentially private algorithms based on a Haar wavelet transformation step and a novel noise injection scheme that significantly lowers the asymptotic bound of the noise variance. 
We also present a holistic convergence analysis of our proposed algorithm, 
showing that our method yields better convergence performance than the vanilla DP algorithms. 
Numerical experiments on real-world datasets demonstrate that our method outperforms existing approaches in model utility while maintaining the same privacy guarantees. 
\end{abstract}

\begin{IEEEkeywords}
Federated Learning, Differential Privacy, Federated Averaging, Wavelet Transform, Stochastic Gradient Descent
\end{IEEEkeywords}}

\maketitle

\IEEEdisplaynontitleabstractindextext

\IEEEpeerreviewmaketitle

\IEEEraisesectionheading{\section{Introduction}\label{sec:introduction}}

\IEEEPARstart{T}{he} proliferation of Internet of Things (IoT) devices has led to exponential growth in data creation. 
Machine learning (ML) has become an essential tool to analyze this data and extract valuable insights for various applications, 
including facial recognition, 
data analytics, 
weather prediction, 
and speech recognition, among others~\cite{guo2019survey, ayele2018air, nguyen20216g, kumar2021mobihisnet, ma2019privacy}. 
However, 
in real-world settings, data --- particularly personal data --- is often created and stored on end-user devices. 
The majority of traditional ML algorithms require the centralization of these training data, 
which involves collecting and processing data at a potent cloud-based server \cite{verbraeken2020survey, nguyen2021federated}. 
This process carries significant risks to data integrity and privacy, 
particularly when it comes to personal data. 
This is because personal data may contain sensitive or private information, 
and the centralization of data creates a single point of failure that might jeopardize the integrity of the data. 
Exposing such data can lead to financial loss, lawsuits, reputation damage, and even physical violence. 
In addition, 
centralized data processing and administration impose restricted transparency and provenance, 
which could result in end-users losing trust in the system~\cite{lim2020federated}. 

Federated learning (FL) \cite{konevcny2016federated, mcmahan2017communication} is a cutting-edge development in distributed ML, 
designed to enable collaborative learning in decentralized environments. 
In this context, 
an ML model is implemented by applying an algorithm to a variety of local datasets maintained by client devices. 
The updated model parameters are then sent to a central server for aggregation,
eliminating the need for gathering and processing training data at a centralized data server.

In a traditional machine learning system, 
a large dataset is partitioned across cloud servers and optimized using techniques such as stochastic gradient descent (SGD)~\cite{bottou2010large}, 
adaptive moment estimation (ADAM)~\cite{kingma2014adam}, 
or adaptive gradient algorithm (Adagrad)~\cite{duchi2011adaptive}. 
However, 
in an FL environment, 
the data is dispersed unevenly across millions of devices with varying degrees of availability and connectivity. 
These bandwidth and latency restrictions lead to the development of Google's Federated Averaging (FedAvg) algorithm [10], 
which uses far fewer communication rounds to train deep networks than a naive version of a federated machine learning optimizer. 

The key concept behind FL is to compute local updates using the powerful CPUs available on client devices, 
allowing for training to consume substantially less communication overhead. 
This is achieved by generating high-quality updates in fewer cycles than traditional gradient steps, 
resulting in a better model.

\color{black}
With the increasing need for data security and privacy in big data applications and distributed learning systems, the preservation of privacy has become a major concern. FL provides a significant advantage by performing local training without transferring personal data between the server and clients, thus safeguarding client data from covert attackers. However, recent research has shown that variations in parameters trained and uploaded by clients can still reveal private information to some extent \cite{geiping2020inverting, zhu2019deep}.

Differential privacy (DP) \cite{dwork2006,dwork2014algorithmic} has shown to be a promising solution for preventing adversaries from inferring information from FL applications. This DP characteristic confers strong privacy  on a data-processing mechanism rather than the data itself. It is widely used for query release, synthetic data synthesis, and machine learning training, among other things \cite{dwork2006, asghar2021, liu2021}. The work in \cite{abadi2016deep} introduced a novel optimization technique based on the concept of DP \cite{dwork2014algorithmic} applicable to deep learning techniques, which can also be adapted for use at the sample level in FL algorithms. More specifically, they presented a modification to the classic SGD technique, which is a cost function-based optimization algorithm commonly used in DL applications to fit linear classifiers and regressors. This algorithm is called differentially private stochastic gradient descent (DP-SGD) and is considered highly effective in achieving sample-level differential privacy in FL applications. However, users may occasionally provide several examples to the training dataset, in which case sample-level DP may not be strong enough to prevent user data from being memorized. To address this issue, \cite{mcmahan2017learning} created the Differentially Private Federated Averaging (DP-FedAvg) algorithm, which provides user-level DP. This means that the output distribution of models remains constant even if we add or delete all of the training samples from any one client. However, these DP algorithms \cite{abadi2016deep,mcmahan2017learning} exhibit a considerable loss in model utility when compared to non-private versions. The insertion of Gaussian noise into the gradient steps utilized by DP-SGD and the insertion of Gaussian noise into the global aggregation in DP-FedAvg cause this deterioration. Increasing the amount of noise increases privacy assurances, but it also reduces utility.
\color{black}

\color{black}Motivated by these limitations, our paper presents a novel modification to the vanilla DP-SGD and DP-FedAvg algorithms that delivers significantly enhanced utility while maintaining the same privacy guarantees, resulting in a superior utility-privacy tradeoff. This approach is inspired by the Privelet framework introduced by \cite{xiao2010differential} for differentially private data publishing of range count queries. However, while Privelet is designed for static, non-learning scenarios, our work extends this technique to dynamic and iterative machine learning and FL environments. In these settings, we address unique challenges such as managing the sensitivity of gradient updates and ensuring robust differential privacy guarantees across multiple rounds of communication. Our method integrates wavelet transforms into the differentially private federated averaging process, providing theoretical convergence guarantees and practical performance improvements demonstrated through extensive empirical validation on real-world FL datasets.\color{black}The strategy is based on a Haar wavelet transformation (HWT), a linear transformation approach commonly used in data compression, image, and signal processing \cite{santoso1997power, zhang2019wavelet, bentley1994wavelet}. \cite{xiao2010differential} demonstrated that by performing Laplace noise injection in the resultant coefficients of HWT, the noise variance can be greatly reduced, hence enhancing the accuracy of information obtained from DP range-count queries. Our method proposes a new Gaussian noise injection system for DP-SGD and DP-FedAvg that adapts and employs the HWT technique.

\textbf{Contributions and Paper Structure:} Our paper provides the following key contributions:

\begin{enumerate}[topsep=0pt, itemsep=-1ex, partopsep=1ex, parsep=1ex]
\item We propose a novel modification to the DP-SGD and DP-FedAvg algorithms by employing a Haar wavelet transform approach to achieve a better utility-to-privacy tradeoff than their vanilla counterparts (i.e., ensuring the same $(\epsilon, \delta)$ DP guarantees for much better utility). 
\item We provide theoretical proof to show that our method significantly
outperforms the baseline method in \cite{abadi2016deep}, across the noise variance bound. This is demonstrated to provide higher accuracy.
\item We present a convergence analysis to show that our method yields better convergence performance than the vanilla DP-FedAvg and DP-SGD algorithms.
\item We demonstrate that when applied to two widely known scientific benchmark datasets, our technique greatly outperforms the vanilla DP-SGD and DP-FedAvg in terms of both accuracy and loss.
\item We provide experimental results, showing that our technique works with other optimization algorithms such as DP-ADAM and DP-AdaGrad in a "black-box" way to obtain a better utility-to-privacy trade-off.
\end{enumerate}

The remainder of this paper is organized as follows. The background principles that are utilised in our contribution are briefly discussed in Section~\ref{sec:bg}. Section~\ref{sec:so} discusses our system model and the privacy model. Section~\ref{sec:met} looks into the specifics of our new method. Our analytical results are explained in Section~\ref{sec:ta}, and our experimental assessments are presented in Section~\ref{sec:ex}. Section~\ref{sec:co} concludes the paper and points out potential future work.

\begin{table}[]
\centering
\caption{nomenclature}
\label{nomenclature}
\begin{tabular}{@{}ll@{}}
\toprule
$\mathcal{M}$ & A randomized mechanism \\ 
$\varepsilon, \delta$ & DP privacy parameters \\
$D$, $D'$ & Two adjacent datasets \\
${x}_i$ & $i^{th}$Subset of training data \\
${x}_i^k$ & $i^{th}$ Subset of $k^{th}$ clients' training data \\
L($\theta,x_i$) & Loss function \\ 
$\theta$ & Model parameters \\
$\eta_{t}$ & Learning rate \\
\color{black}$K_t$ & \color{black}Total number of clients \\
\color{black}$K$ & \color{black}Number of clients sampled per round \color{black}\\
$k$ & Client indexes \\
$w_t$ & $t^{th}$ weight parameter broadcast \\
 & to the clients by the server \\
$w_{t}^k$ & Weight parameters trained at the $k^{th}$ \\
 & client in the $t^{th}$ communication round \\  
$d$ & Size of the collective client dataset\\
$d_k$ & Size of the dataset held by $k^{th}$ client\\
$\mathcal{S}_f$ & Bounded sensitivity \\
$q_c$ & Client selection probability \\
$T$ & Total number of communication rounds \\
$C,C_j$ & Flat/per-layer clipping threshold \\
$\mathcal{N}(a,b^2)$ & Gaussian noise with mean $a$ and \\
 & standard deviation $b$ \\
$W_{(Haar)}$ & Wavelet transform weight function \\
$L_k(\theta)$ & Loss of client k \\
$\theta^*$ & Minimizer of function $L(\theta)$ \\
\bottomrule
\end{tabular}
\end{table}

\section{Background}\label{sec:bg}
This section provides a brief overview of the fundamentals of FL, the DP algorithm, and the instances where DP is used to make FL algorithms private.

\subsection{Differential Privacy}

\begin{figure}[!t]
\centering
\includegraphics[width=3.3in]{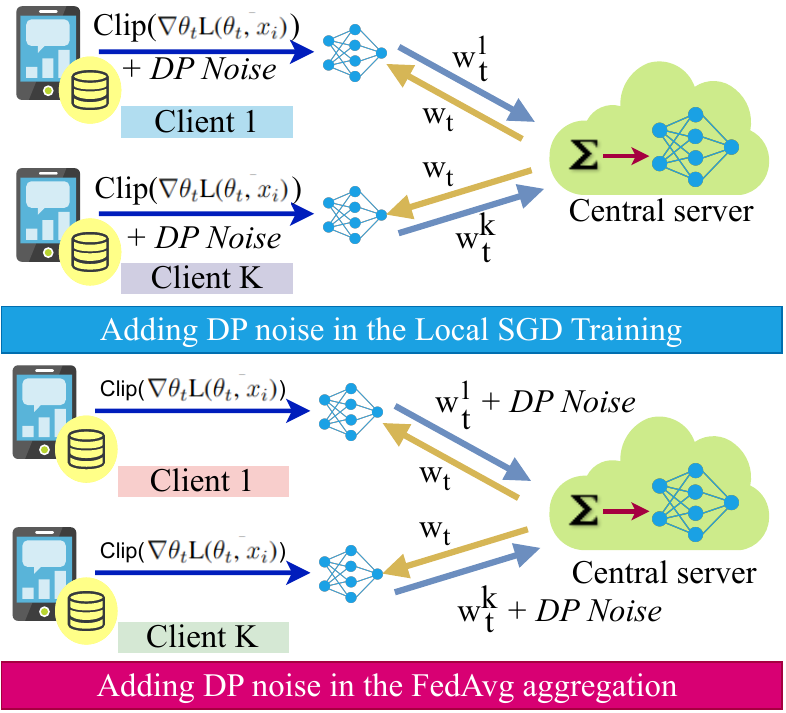}
\caption{FL configurations with sample-level DP and user-level DP.}
\label{fedavg1}
\end{figure}

In the literature, differential privacy (DP) has been widely proven as a robust privacy protection mechanism \cite{dwork2008differential}, \cite{dwork2014algorithmic}. In its most basic form, the goal of DP is to ensure that nothing further can be inferred about an individual data record despite the inclusion of that data record in the mechanism's input, a concept known as plausible deniability. The capacity to quantify the privacy budget of a DP mechanism using parameters ($\varepsilon$, $\delta$)\footnote{Although this is technically characterized as approximate differential privacy because of the presence of the $\delta$ parameter, the great majority of literature simply refers to such with the $\delta$ parameter as differential privacy, as we do here.} is also a significant aspect of DP.

\begin{definition}
A randomized algorithm $\mathcal{M}$ satisfies ($\varepsilon, \delta$) differential privacy for two adjacent datasets $D$ and $D'$ and any subset of output S$\subseteq$ Range(R) holds, 

\begin{equation}
\label{DP_def}
    Pr[\mathcal{M}(D)\in S] \leq e^{\varepsilon}Pr[\mathcal{M}(D')\in S] + \delta.
\end{equation}
\end{definition}

Here, $\varepsilon$ is a measure of privacy loss when a data point, such as a record from a dataset, is added or removed, with a smaller $\varepsilon$ indicating a greater privacy guarantee. The term $\delta$ is the failure probability, which is the maximum amount by which the algorithm is allowed to deviate from the desired privacy guarantee. For example, in \ref{DP_def}, we can achieve $\varepsilon$ privacy guarantee with $(1 - \delta)$ probability.  and $\delta$ is the failure probability. 

For a given function $f(x)$, DP can be achieved by adding noise calibrated to the function's sensitivity $S_f$. There are several noise mechanisms that can be employed to make $f(x)$ differentially private, including the Laplace mechanism \cite{dwork2008differential} and the exponential mechanism \cite{mcsherry2007mechanism}. However, in approximate DP, a classic DP technique \cite{dwork2014algorithmic} is to draw noise using a Gaussian mechanism. Such a differentially private counterpart $F(x)$ for function $f(x)$ is defined as follows:

\begin{equation}
\begin{split}
F(x) = f(x) + \mathcal{N}(0, \sigma^{2}\mathrm{S}_f), 
\end{split}
\end{equation}
where $\sigma$ is the scale of the noise added to the data and controls the trade-off between privacy and data utility. A larger value of $\sigma$ results in less noisy data and better utility, but weaker privacy protection. On the other hand, a smaller value of $\sigma$ results in more noisy data and lower utility, but stronger privacy protection.

\subsection{Federated Learning With Differential Privacy}

The previously-mentioned issues with conventional FL systems, i.e. the possibility of inferring sensitive information from the uploaded model parameters, can be addressed by adding DP to the FL framework, which allows for the training of models on distributed data while preserving the privacy of the data on each device. There are two main techniques for adding differential privacy to FL: DP-SGD~\cite{abadi2016deep} and DP-FedAvg\cite{mcmahan2017learning}.

In DP-SGD, noise is added to the gradients computed on the local data during the training process, ensuring that the updates sent to the central server do not reveal information about the individual data points used for training. This technique provides stronger privacy guarantees, making it suitable for scenarios where data is highly sensitive, such as in medical or financial applications~\cite{jiang2021differential}. However, DP-SGD can also be computationally expensive, as the noise must be added to each gradient computation, increasing the training time on each device \cite{NEURIPS2021_a3842ed7}. The weight updating of the DP-SGD algorithm in the local training process can be summed up as follows. 

\begin{equation}
\label{eq:update}
\theta^{t+1}= \theta^t - \eta_{t}( \triangledown_{\theta}\mathcal{L}(\theta , x_i) + \mathcal{N}(0, \sigma^{2}\mathrm{S}_fI)),
\end{equation}
where $\mathcal{S}_f$ represents a sensitivity bound for the gradients from a deep neural network. Since there is no prior bound to regulate the sensitivity of the gradients from a deep neural network such that $\Vert \triangledown_{\theta}\mathcal{L}(\theta, x_i)\Vert \le \mathcal{S}_f$, bounding the sensitivity before adding Gaussian noise is necessary, In order to enforce such a bound the updates needs to be clipped. \cite{mcmahan2017learning} proposes the following clipping techniques to accomplish this.

\begin{enumerate}[topsep=0pt, itemsep=-1ex, partopsep=1ex, parsep=1ex]
\item \textbf{Flat clipping}: Gradient concatenation of all the layers in the neural network is clipped as $g_{t}$=$g_{t}$/( max(1, $\dfrac{{\Vert g_{t})\Vert}_{2}}{C}$)) given an overall clipping parameter $C$. ($\mathcal{S}_f \leftarrow C$)

\item \textbf{Per layer clipping}: Given a per-layer clipping value $C_j$, each layer j is clipped separately as $g_{t}(j)$=$g_{t}(j)$/( max(1, $\dfrac{{\Vert g_{t}(j))\Vert}_{2}}{C_j}$)) since the updates of each layer in a deep network may have significantly different $L_2$ norms as. \color{black} Here $p$ being the number of layers in the deep neural network, let $\mathcal{S}_f = \sqrt{\sum_{j=1}^pC_j}$
\end{enumerate}

\color{black}In DP-FedAvg, noise is added during the aggregation of the model updates from each client to the central server, providing computational efficiency as the noise is injected only once during the aggregation step. This process offers strong privacy guarantees, not just at the local data level but also at the global model level. Specifically, adding noise to the local SGD training protects the privacy of individual data points at the client level by preventing information leakage from local updates. However, DP-FedAvg additionally protects the privacy of the global model, as the noise added at the aggregation step ensures that the final model remains differentially private. This is particularly beneficial in scenarios where adversaries may attempt to infer sensitive information from the global model parameters.\color{black} On the other hand, DP-SGD allows for more flexible privacy budget allocation and may have lower computational overhead than DP-FedAvg in certain scenarios. It is also better suited to small-scale Federated Learning where there are only a few devices participating in the learning process. In addition, DP-SGD is a simpler algorithm that can be easier to implement and understand in some cases. Ultimately, the choice between DP-SGD and DP-FedAvg depends on the specific requirements of the application, such as the level of privacy needed, the available computational resources, and the nature of the data being used. 

In a typical FL system, with one server and $K$ clients participating in each round of training. Learning a model from data stored at the $K$ number of linked client devices is the aim of the server. Formally, the weights $w_{t}^k$ received from the $K$ clients (chosen with  a probability of $q_c$) gets aggregated at the central server as follows, 

\begin{equation}
\label{fedavgalgo}
    w_t = \sum_{k=1}^K \dfrac{d_k}{q_cd}w_{t+1}^k.
\end{equation}

However, when using this approach with DP, the sensitivity  $\mathcal{S}_f$ of the FL process needs to be bounded, as different clients may produce samples of different sizes. By applying eq.~\ref{fedavgalgo} and considering that the norm of ${\Vert \eta_j w_{t+1}^k \Vert}_2 \le C$ is bounded, we can express the sensitivity bound of the client update using weighted averaging as follows,

\begin{equation}
\mathcal{S}_f \le \dfrac{C}{q_cd}.
\end{equation}

To protect the privacy of data on each device, it is essential to monitor the privacy budget utilized during the training process. A method called the Moments Accountant (MA), introduced by \cite{abadi2016deep}, offers an upper bound for the privacy curve of a composition of differential privacy (DP) algorithms. The privacy loss analysis using the Moments Accountant involves combining the privacy curve of each training iteration with itself $T$ times, where $T$ is the total number of training iterations. The Moments Accountant has been integrated into the Renyi Differential Privacy (RDP) accountant framework introduced by \cite{Mironov_2017}. Although the running time of this accountant is independent of $T$, it only provides an upper bound and cannot approximate the privacy curve to arbitrary accuracy. \cite{dong2019gaussian,bu2020deep} proposed the idea of Gaussian Differential Privacy (GDP) and developed an accountant for DP algorithms based on the central limit theorem. This accountant provides an approximation to the actual privacy curve and the approximation improves with $T$.

\section{Scheme Overview}\label{sec:so}
\subsection{System Model}

\color{black}Our system model is based on an FL architecture, where a central server coordinates the learning process with 
$K_t$ representing the total number of participating clients. In each communication round, a subset of clients, denoted as $K$, is randomly sampled to participate.\color{black} To provide a concise summary of the FL system, we outline the following key steps:

\begin{enumerate}[topsep=0pt, itemsep=-1ex, partopsep=1ex, parsep=1ex]
\item \textbf{Initialization}: The server first randomly chooses or pretrains a global model using public data. 
\item \textbf{Broadcasting}: The global model's parameters are distributed to the clients by the server and the local models get updated by the global model.
\item \textbf{Local training}: Each client updates
their local models using their data and uploads the updated model parameters to the server.
\item \textbf{Aggregation}: The receiving models are weighted and aggregated by the central server, which then transmits the updated models back to the nodes.
\item \textbf{Termination}: All phases, with the exception of \textbf{\textit{Initialization}}, are iterated until a pre-set termination requirement is satisfied (e.g. the model accuracy exceeds a threshold or a maximum number of iterations has been attained)
\end{enumerate}

In our FL system, which is illustrated in Fig.~\ref{fl_model}, each client has access to its own local database $D_k$, where  $k \in (1, 2, ..., K)$. The server aims to learn a model from the data stored on the $K$ linked client devices. 

\begin{figure}[!t]
\centering
\includegraphics[width=3.3in]{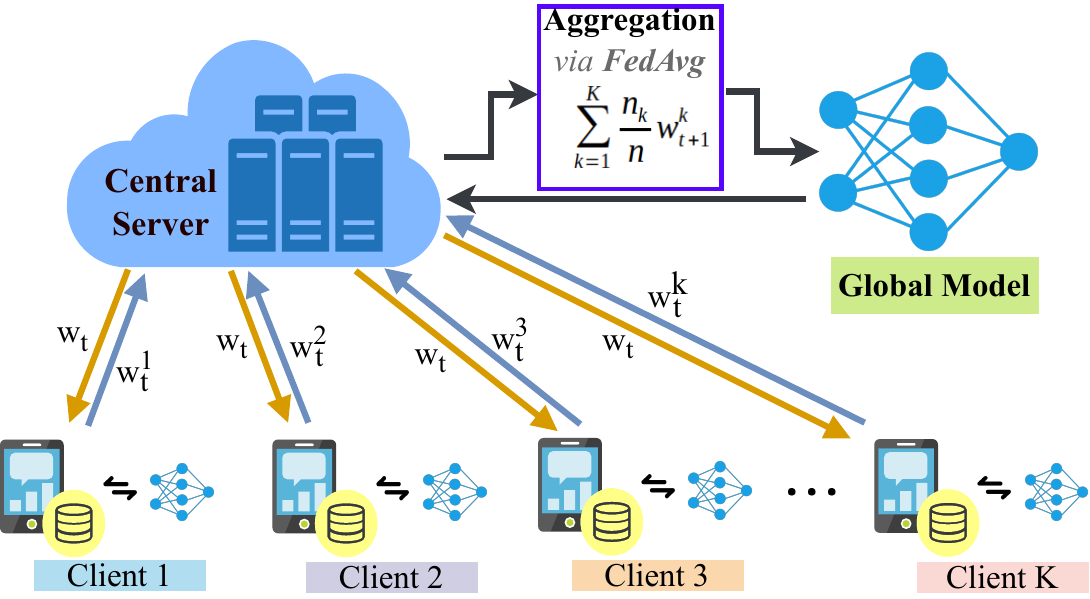}
\caption{Example of a federated learning model.}
\label{fl_model}
\end{figure}

\subsection{Threat Model}

Our threat model considers that the server is honest but curious about the data of the clients and that outside adversaries could intercept model parameters sent through the FL system. The intermediate parameters that are communicated with the server during the model aggregation process could reveal sensitive information about the data of clients, such as uncommon features or distinctive patterns that could be utilized by an attacker to deduce personal information. We take into account an opponent that actively seeks to extract private data from the gathered information. The opponent could be a user of the system, an outsider who has access to the system, or an intruder with the ability to undermine its security. We assume that the adversary is fully aware of the design, algorithm, and settings of the FL system.

In our proposed threat model, we identify two types of attacks that can compromise the privacy of clients' data in federated learning: membership inference attacks and model inversion attacks.

\begin{enumerate}[topsep=0pt, itemsep=-1ex, partopsep=1ex, parsep=1ex]

\item \textbf{Membership Inference Attacks}: This type of attacks attempt to determine if a specific client's data was used to train the model. It can reveal sensitive information about a client, such as their medical conditions or financial status. An attacker can perform this type of attack by observing the intermediate parameters shared with the server during the FL model aggregation process.
\item \textbf{Model Inversion Attacks}:
These attacks attempt to recover sensitive data used to train the model, such as individual medical records or financial transactions. It relies on analyzing the intermediate parameters and the final model to extract sensitive data. An attacker can use this information to infer details about the clients' data, such as rare features or unique patterns.
\end{enumerate}

To lessen these risks, we integrate DP approaches into our model aggregation procedure. This makes sure that neither the final model nor the intermediate parameters expose any private information about the clients. The trained model's performance, however, may be impacted by the local parameter vectors' inclusion of DP-injected noise. Hence, in order to increase classification performance while preserving the same amount of differential privacy, we strive to strengthen the model's robustness against DP-injected noise in this study. It is important to note that we consider two scenarios: adding DP noise to the SGD training process at the sample level and adding DP noise at FedAvg at the user level. We acknowledge that the broadcast of the global parameter vector and the shared intermediate parameters with the server may expose the confidential information of the clients to outside adversaries.

\section{Methodology}\label{sec:met}
This section outlines our novel modification to the DP-SGD algorithm to obtain sample level DP in FL settings, DP-SGD via wavelets (DP-SGD-WAV), and our suggested modification to the DP-FedAvg algorithm to obtain user-level DP, DP-FedAvg via Wavelets (DP-FedAvg-WAV), both by employing wavelet transforms. Additionally, we also go through the technique for computing the  Haar wavelet transform ( HWT) along with the novel noise injection scheme we propose.

\begin{algorithm}[h!]
  \caption{Pseudocode of DP-SGD via Wavelet Transform in a FL configuration}\label{dp-sgd}
  \begin{algorithmic}
    \Require {Input: K amount of clients chosen with a selection probability of $q_c \in (0, 1]$, $k^{th}$ clients local dataset $x_{i}^k$, loss function L($\theta$ , $x_{i}^k$)} Parameters: learning rate $\eta$, noise scale $\sigma$, local lot size $L_n$, coefficient norm bound C.
\Ensure $w_{t}$ and compute the overall privacy cost ($\varepsilon$, $\delta$) using a privacy accounting method.
\State \textbf{Server executes:}
\State \hspace{1.2em} Initialize $w_{t}$ randomly
\State \hskip1.5em \textbf{For} t $\in$ $\vert$T$\vert$ do
\State \hskip3em \textbf{For} each client k $\in$ $\vert$K$\vert$ do
\State \hskip4.5em $w_{t+1}^k \longleftarrow$ ClientTraining$(w_t, k)$

\State \hskip3em $w_{t+1} \longleftarrow \sum_{k=1}^K \dfrac{d_k}{q_cd}w_{t+1}^k$
\State \hrulefill
    \State \textbf{ClientTraining}($\theta_{n}$, k):
 \State \hspace{1.2em}\textbf{For} n $\epsilon$ $\vert$N$\vert$ do
\State \hspace{2.4em} Take a random sample $L_n$ with sampling 
\State \hspace{2.4em} probability q
\State \hspace{2.4em} \textbf{Compute gradient}
\State \hspace{2.4em} For each i $\in$ $L_{n}$, compute 
\State \hspace{3.6em} $g_n$($x_{i}^k$) $\longleftarrow$ $\nabla\theta_{n}$L($\theta_{n}$, $x_{i}^k$)
\State \hspace{3.6em} \textbf{Add noise}
\State \hspace{3.6em} $\widehat{g}_n$($x_{i}^k$) $\longleftarrow$ WaveletNoise($g_n(x_i^k)$)
\State \hspace{3.6em} \textbf{Descent}
\State \hspace{3.6em} $\theta_{n+1}$ $\longleftarrow$ $\theta_n$ - $\eta(\widehat{g}_n(x_{i}^k))$
\State \hspace{1.2em}\textbf{Return} $\theta_{n+1}$ to the server
\State \hrulefill
\State \textbf{WaveletNoise}($g_n(x_i^k$)):
\State \hspace{1.2em}\textbf{Wavelet transform}
\State \hspace{1.2em}$H_{n}(x_i^k) \longleftarrow g_{n}$($x_{i}^k$)
\State \hspace{1.2em}\textbf{Clip wavelet coefficients}
\State $H_{n}$($x_{i}^k$) $\leftarrow$ $H_{n}$($x_{i}^k$) / max(1, $\dfrac{{\Vert H_{n}(x_{i}^k)\Vert}_{2}}{C}$)
\State \hspace{1.2em}\textbf{For} each Haar $\in$ $H_n$($x_i^k$)
\State \hskip2,4em \textbf{Add noise}
\State \hskip2.4em $\widehat{H_n}(x_{i}^k) \longleftarrow$($H_{n}$(Haar)+$\mathcal{N}$(0, $\dfrac{\sigma^2C^2}{(W_{(Haar)})^2}$)
\State \hspace{1.2em}\textbf{Inverse wavelet transform}
\State $\widehat{g}_{n}(x_{i}^k) \longleftarrow \dfrac{1}{L}(\Sigma_{i} H_{n}(x_{i}^k)$)
\State \hspace{1.2em}\textbf{Return} $\widehat{g}_{n}(x_{i}^k)$
  \end{algorithmic}
\end{algorithm}

\subsection{DP-SGD via Wavelet Transforms}

We employ the exact same methodology as the standard DP-SGD. However, we suggest performing the clipping and noise injection on the wavelet coefficients, or final coefficients, of an HWT. Our method adds Gaussian noise calibrated to the clipping norm C to each wavelet coefficient with a variance of $(\dfrac{\sigma}{W_{(Haar)}})^2$, where $\sigma$ and $W_{(Haar)}$ denote the input and the weight function corresponding to each wavelet coefficient, respectively. In Section~\ref{sec::1d}, we go into greater detail about these weights and the HWT technique.

It has been intensely debated and extensively researched how to choose the best clipping threshold C for DP-SGD \cite{chen2020understanding}\cite{wei2021user}. In our trials, we employ a clipping strategy based on a median value and adhere to the methodology in \cite{abadi2016deep}. Once the noise is added to the coefficients, we utilize an inverse wavelet transformation step to construct the noisy gradient vectors. The neural network will then be updated in the opposite direction, just like in standard DP-SGD computations, using the reconstructed noisy gradients. Algorithm~\ref{dp-sgd} provides a summary of the improved DP-SGD version employing wavelet transformations.

\subsection{DP-FedAvg via Wavelet Transforms}

We demonstrate how our method can be modified to improve utility in user-level DP configurations, specifically by introducing a wavelet transformation function to perform noise injection at the DP-FedAvg algorithm's global aggregation step. However, given a required noise variance prior to training, the noise variance that is added to the wavelet coefficients will be adjusted to reflect this using a correlation that we provide in our theoretical results. The main difference between this and algorithm~\ref{dp-sgd} is that algorithm~\ref{dp-fed} is a synopsis of the proposed DP-FedAvg-Wav algorithm.
The primary difference between this and algorithm \ref{dp-sgd} is where the noise is added, which influences where we conduct the wavelet transformation. As previously stated, DP-FedAvg injects Gaussian noise during aggregation; thus, we perform wavelet transformation on the aggregation model parameters.

\begin{algorithm}
  \caption{Pseudocode of DP-FedAvg via Wavelet Transform}\label{dp-fed}
  \begin{algorithmic}
    \Require {Input: K amount of clients chosen with a selection probability of $q_c \in (0, 1]$, $k^{th}$ clients local dataset $x_{i}^k$, loss function L($\theta$ , $x_{i}$)} Parameters: learning rate $\eta$, noise scale $z$, local lot size $L_n$, gradient norm bound C.
    \Ensure $w_{t}$ and compute the overall privacy cost ($\varepsilon$, $\delta$) using a privacy accounting method.
\State \textbf{Server executes:}
\State \hspace{1.2em} Initialize $w_{t}$ randomly
\State \hskip1.5em \textbf{For} t $\in$ $\vert$T$\vert$ do
\State \hskip3em \textbf{For} each client k $\in$ $\vert$K$\vert$ do
\State \hskip4.5em $w_{t+1}^k \longleftarrow$ ClientTraining$(w_t, k)$
\State \hskip3em $w_{t+1} \longleftarrow \sum_{k=1}^K \dfrac{d_k}{q_cd}w_{t+1}^k$
\State \hskip3em $\sigma \longleftarrow \dfrac{zC}{q_cd}$
\State \hskip3em $\widehat{w}_{t+1} \longleftarrow$ WaveletNoise($w_{t+1},\sigma)$
\State \hrulefill
    \State \textbf{ClientTraining}($\theta_{n}$, k):
 \State \hspace{1.2em} \textbf{For} n $\epsilon$ $\vert$N$\vert$ do
\State \hspace{2.4em} Take a random sample $L_n$ with sampling 
\State \hspace{2.4em} probability q
\State \hspace{2.4em} \textbf{Compute gradient}
\State \hspace{2.4em} \textbf{For} each i $\in$ $L_{n}$, compute 
\State \hspace{3.6em} $g_n$($x_{i}^k$) $\longleftarrow$ $\nabla\theta_{n}$L($\theta_{n}$, $x_{i}^k$)
\State \hspace{3.6em} \textbf{Clip gradients}
\State \hspace{3.6em} $g_{n}$($x_{i}^k$) $\leftarrow$ $g_{n}$($x_{i}^k$) / max(1, $\dfrac{{\Vert g_{n}(x_{i}^k)\Vert}_{2}}{C}$)

\State \hspace{3.6em} \textbf{Descent}
\State \hspace{3.6em} $\theta_{n+1}$ $\longleftarrow$ $\theta_n$ - $\eta(\widehat{g}_n(x_{i}^k))$
\State \hspace{1.2em}\textbf{Return} $\theta_{n+1}$ to the server
\State \hrulefill
\State  \textbf{WaveletNoise}($w_t$,$\sigma$):
\State \hspace{1.2em} \textbf{Wavelet transform}
\State \hspace{1.2em} $H_{t} \longleftarrow w_{t}$
\State \hspace{1.2em} \textbf{For} each Haar $\in$ $H_t$
\State \hskip2.4em \textbf{Add noise}
\State \hskip2.4em $\widehat{H_t} \longleftarrow$($H_{t}$(Haar)+$\mathcal{N}$(0, $\dfrac{\sigma^2}{(W_{(Haar)})^2}$)
\State \hspace{1.2em} \textbf{Inverse wavelet transform}
\State \hspace{1.2em} $\widehat{w}_{t} \longleftarrow \dfrac{1}{L}(\Sigma_{i} H_{t}$)
\State \hspace{1.2em}\textbf{Return} $\widehat{w}_{t}$
  \end{algorithmic}
\end{algorithm}
\subsection{One-dimensional Haar Wavelet Transform}\label{sec::1d}

\begin{figure}[!t]
\centering
\includegraphics[width=3.3in]{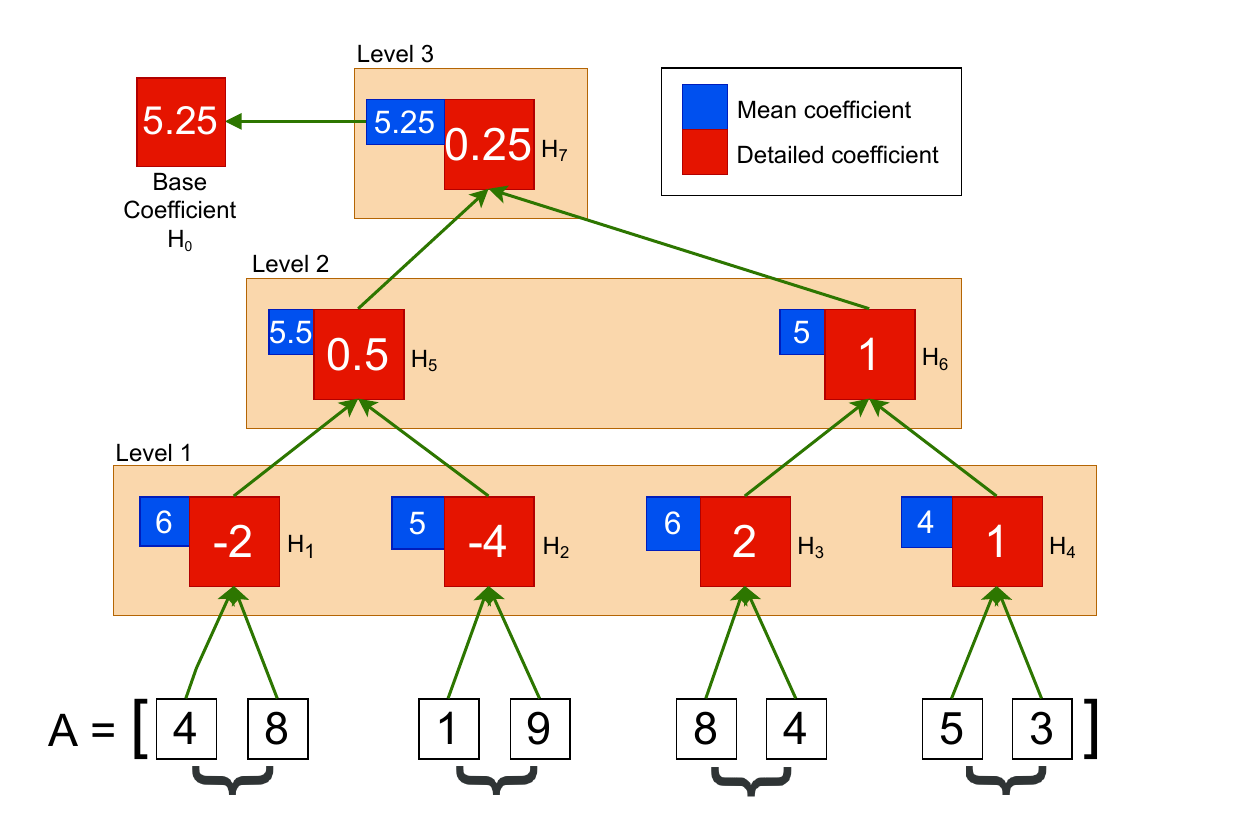}
\caption{Decomposition tree illustration of Haar wavelet transform.}
\label{tree}
\end{figure}

As an example, consider a data vector A with integer values such that A=[4, 8, 1, 9, 8, 4, 5, 3].
The HWT computation for this data vector is started by creating value pairs adjacent to one another. These pairs are then averaged to provide the mean coefficients [6, 5, 6, 4], which provide a new lower-level representation of the values. In other words, the first pair's [4, 8] mean is 6, the second pair's [1, 9] mean is 5, and so forth. We will compute the detailed coefficients for each of these pairings in parallel with the average computations and save them as the first-level wavelet coefficients. This is calculated using the formula $(L-R)/2$, where $L$ and $R$ are the first and second values in the pair, respectively. The results of the detailed coefficient computation for the example above are the coefficients [-2, -4, 2, 1]. Now, this procedure will be carried out $\log_2(m)$ times, where m is the array's cardinality, until there is only one detailed coefficient left. We only take into account the mean values for all calculations at each level of the transformation, saving the detailed coefficients separately. Table 2 provides a summary of the wavelet computations for the example mentioned above. As can be seen, this computation only works for arrays with $2^n (n \in \mathbb{Z}^+ )$ elements. As a result, any array that does not meet this requirement must be transformed into an array with two elements by zeros-padding. For instance, in order to apply an HWD to the array [1, 2, 3, 4, 5], zeros must be added to the end of the array, making it [1, 2, 3, 4, 5, 0, 0, 0].

\begin{table}[]
\centering
\caption{Haar wavelet transform computation.}
\label{hwt}
\begin{tabular}{lll}
\hline
 & & \\
Coefficient level & Mean coefficient      & Detailed coefficient \\ 
 & & \\ \hline
  & & \\
3                 & {[}4, 8, 1, 9, 8, 4, 5, 3{]} &                      \\ 
2                 & {[}6, 5, 6, 4{]}         & {[}-2, -4, 2, 1{]}      \\ 
1                 & {[}5.5, 5{]}           & {[}0.5, 1{]}         \\ 
0                 & 5.25                  & {[}0.25{]}           \\ \hline
\end{tabular}
\end{table}

In order to more easily refer to each computed coefficient, the wavelet coefficients can also be expressed in a decomposition tree, as shown in Fig.~ \ref{tree}. However, even though the mean coefficients are calculated in each step, they just serve as shadow coefficients, and only detailed coefficients are used for all calculations after the wavelet transform, including the backwards transformation. The base coefficient $H_0$, which is the average of the final level of computation, will also be used.

Invertibility is one of HWT's most intriguing characteristics. As shown below, each element in the original array may be recreated using the wavelet coefficients determined $[H_0, H_1, H_2, H_3, H_4, H_5, H_6, H_7]$ for our prior example.

\begin{equation}
\label{eq:600}
X=H_0 + \sum_{i=1}^{l}(S_{i}H_i).
\end{equation}

We must first take, the wavelet coefficients' decomposition tree illustration, into account in order to achieve a value for $S_i$. Using the decomposition tree illustration, a chain of wavelet coefficients connected to each element of the input array can be built. The ancestral wavelet coefficients chain of the third element (1) of the input array, for instance, are H2 H5 H7. Since the wavelet coefficients are generated in pairs, it is possible to determine whether the branch connecting the wavelet coefficient in the $l^{th}$ level and the wavelet coefficient in the $(l-1)^{th}$ level is on the left or right side. As a result, depending on whether the coefficient below is on the left or right side, $S_i$ equals 1 or -1.

As previously mentioned, our method adds independently generated Gaussian noise with a standard deviation of ${\sigma/W_{(Haar)}}$ to each wavelet coefficient, where $\sigma$ is the input parameter and $W_{(Haar)}$ is a weight function that changes depending on the coefficient's level in the decomposition tree. This noise is calibrated to the clipping threshold $C$. Here, $W_{(Haar)}$ is defined as follows for each coefficient: a base coefficient $W_{(Haar)}$ = m, where m is the size of the input array in terms of elements. $W_{(Haar)}$ = $2^l$ follows for any coefficient in level l. For instance, $W_{(Haar)}$ values for the computed Haar wavelet coefficients [$H_0, H_1, H_2, H_3, H_4, H_5, H_6, H_7$] shown in Fig.~\ref{tree} will be [8, 2, 2, 2, 2, 4, 4, 8].

\color{black}
In traditional DP learning techniques, sensitivity is managed through a clipping strategy that bounds the sensitivity of gradients using a fixed norm bound $C$. However, After applying the HWT to the gradients, the resulting wavelet coefficients at each level of the transform are clipped using a per-level norm bound. Each level of the HWT introduces different weight functions, which causes the sensitivity of the coefficients to vary across levels. This is a crucial difference from the flat clipping approach traditionally used in gradient-based methods. The weighted clipping across different levels of the wavelet transform enables us to refine the sensitivity control, as it adapts to the structure of the transformed data.

By leveraging the multi-level structure of the wavelet coefficients, the sensitivity reduction is achieved at each level, resulting in a lower overall noise variance when noise is injected. This reduction in noise variance significantly improves the utility of the model while preserving the same DP guarantees. The wavelet-based noise injection, combined with sensitivity-aware clipping, results in an enhanced privacy-utility tradeoff compared to standard approaches that apply a flat clipping strategy directly to the gradients.

\color{black}

\section{Theoretical analysis}\label{sec:ta}

This section provides theoretical justification for the claims we are making in this research. starting with noise variance constraints, then DP analysis, and finally a convergence performance analysis.

\subsection{Noise Variance Bounds Analysis}
Assume that a vanilla DP-SGD or DP-FedAvg algorithm using any accountant technique, such as the moments accountant [15] or the Renyi Differential Privacy (RDP) accountant [33], provides $(\varepsilon, \delta)$ privacy guarantee when a noise with a variance of $\sigma^2$ is introduced to gradients. The noise variance bounds for our approach are derived from the following lemma:                                              
\vspace{1em}

\begin{lemma}
\color{black}Assume that we perform HWT on a set of gradients $(G)$ with $m$ elements, 
which results in a set of wavelet coefficients $(H)$ where we add independent noise with standard deviation of $\dfrac{\sigma_{(Haar)}}{W_{(Haar)}}$. 
The set of noisy gradients ($G^*$) reconstructed from the inverse wavelet transform has a noise variance of $\dfrac{2+\log_2 (m)}{2} (\sigma_{(Haar)})^2$.

\end{lemma}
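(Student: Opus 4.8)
The plan is to exploit the invertibility of the HWT, expressed in the reconstruction identity~\ref{eq:600}, together with the independence of the injected Gaussian noise. First I would fix one reconstructed gradient component and trace its root-to-leaf path in the decomposition tree of Fig.~\ref{tree}: by~\ref{eq:600} this component equals $H_0 + \sum_i S_i H_i$, where the sum runs over the ancestral chain of that leaf and each $S_i \in \{-1, +1\}$. Since the noise is injected independently into every coefficient and the inverse transform acts on it by these same $\pm 1$ coefficients, the reconstructed noise at that coordinate is a sum of independent zero-mean Gaussians; hence its variance is simply the sum of the per-coefficient noise variances taken along the path, with every cross term vanishing because $S_i^2 = 1$ and the noises are mutually uncorrelated.

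Next I would account for the contributions level by level. Each leaf's ancestral chain meets exactly one coefficient at each of the $\log_2 m$ levels of the tree, in addition to the shared base coefficient $H_0$. Substituting the weight assignment of Section~\ref{sec::1d} --- namely $W_{(Haar)} = m$ for the base coefficient and $W_{(Haar)} = 2^l$ for a coefficient at level $l$ --- the noise injected at a node of weight $W_{(Haar)}$ has variance $\sigma_{(Haar)}^2 / W_{(Haar)}^2$. I would then collect these contributions, separating the base term from the $\log_2 m$ detail terms and combining them with the normalization produced by the inverse transform. Once the per-level contribution is shown to be identical across levels, the sum over levels collapses to $\log_2 m$ copies of a single term, which, added to the base contribution, is intended to yield the closed form $\tfrac{2 + \log_2 m}{2}\,\sigma_{(Haar)}^2$.

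The main obstacle is the weight bookkeeping rather than the probabilistic step, which is routine once independence is established. Concretely, the delicate point is translating the per-level weights $W_{(Haar)}$ and the normalization of the inverse transform into the variance each level actually contributes at a reconstructed coordinate, and verifying that this contribution is uniform across the $\log_2 m$ levels so that the series collapses to the stated expression; the base coefficient $H_0$, whose weight $m$ departs from the dyadic pattern $2^l$, has to be handled separately and accounts for the constant term in the numerator. A final consistency check is to confirm that every leaf shares the same path length $\log_2 m$, so that the variance is common to all $m$ reconstructed components; this holds because the zero-padding convention of Section~\ref{sec::1d} makes the decomposition tree a complete binary tree.
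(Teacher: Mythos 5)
Your opening step is sound --- with independent noise and reconstruction weights $S_i\in\{-1,+1\}$, the variance at a reconstructed coordinate is the sum of the per-coefficient noise variances along its ancestral chain. The proof breaks at the pivotal claim of your second paragraph: that the per-level contributions are identical, so that the sum collapses to $\log_2 m$ copies of a single term. Under your per-coordinate reading they are not. A detail coefficient at level $l$ carries noise of variance $\sigma_{(Haar)}^2/W_{(Haar)}^2=\sigma_{(Haar)}^2/4^{\,l}$ and enters the reconstruction of a single element with weight $\pm 1$, so level $l$ contributes $\sigma_{(Haar)}^2/4^{\,l}$ --- a geometric series, not a constant one. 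Together with the base coefficient (reconstruction weight $1$, noise variance $\sigma_{(Haar)}^2/m^2$) your computation would give
\begin{equation*}
\frac{\sigma_{(Haar)}^2}{m^{2}}+\sum_{l=1}^{\log_2 m}\frac{\sigma_{(Haar)}^2}{4^{\,l}}=\Bigl(\frac{1}{3}+\frac{2}{3m^{2}}\Bigr)\sigma_{(Haar)}^2\le\frac{\sigma_{(Haar)}^2}{2},
\end{equation*}
which stays bounded by a constant and can never produce the $\log_2 m$ growth in the stated expression. No bookkeeping of the inverse transform's normalization can repair this, because the $4^{-l}$ decay is built into the weight function itself; likewise, even if each level did contribute a constant, one coefficient per level plus the base could not generate both the leading $\sigma_{(Haar)}^2$ term and the coefficient $2$ multiplying the per-level term.

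The paper's proof reaches the stated formula because it is not a per-coordinate computation at all (the lemma's wording invites your reading, but the quantity actually bounded is different): it bounds the noise of a Privelet-style weighted aggregate $S=mH_0+\sum_{H}H(L-R)$ of the reconstructed gradients, where $L$ and $R$ count leaves of the aggregate lying in $H$'s left and right subtrees. There the base coefficient enters with weight $m$, contributing $m^{2}\cdot\sigma_{(Haar)}^2/m^{2}=\sigma_{(Haar)}^2$, and a level-$l$ coefficient enters with weight $|L-R|\le 2^{\,l-1}$, contributing at most $(2^{\,l-1})^{2}\,\sigma_{(Haar)}^2/4^{\,l}=\sigma_{(Haar)}^2/4$ --- this is where the level-independent $\sigma_{(Haar)}^2/4$ comes from --- and with (at most) two such boundary coefficients per level over $\log_2 m$ levels one lands on $\sigma_{(Haar)}^2+2\log_2(m)\,\sigma_{(Haar)}^2/4=\frac{2+\log_2 m}{2}\sigma_{(Haar)}^2$. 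Your single-coordinate chain has only one coefficient per level and only $\pm1$ weights, so it can yield neither the cancellation of $4^{-l}$ nor the factor of two per level; the missing idea is precisely this aggregate (range-query) weighting inherited from Privelet, without which the claimed expression cannot be derived.
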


\begin{proof}
Consider the wavelet transformation decomposition tree $T$ of a set of gradients $G$. We have a decomposition tree of noisy wavelet coefficients ($T^*$) after adding independent noise to each wavelet coefficient. Recollect that we used equation \ref{eq:600} to reproduce the noisy gradients from $T^*$. Each element in the set of noisy gradients can be expressed as a weighted sum of base coefficients ($H_0$) and all coefficients in its ancestral chain in the decomposition tree. The base coefficient ($H_0$) has a weight of 1, while the other coefficients have weights of 1 or -1 depending on whether the leaves in the level below that are connected to the wavelet coefficient are on the left or right side. As a result, the weighted sum ($S$) for the noisy gradients ($G^*$) can be expressed as follows:

\begin{equation}
S = mH_0 + \sum_{H \in T^* \setminus H_0}(H (L-R)).
\end{equation}

\begin{figure}[b!]
\centering
\includegraphics[width=3.3in]{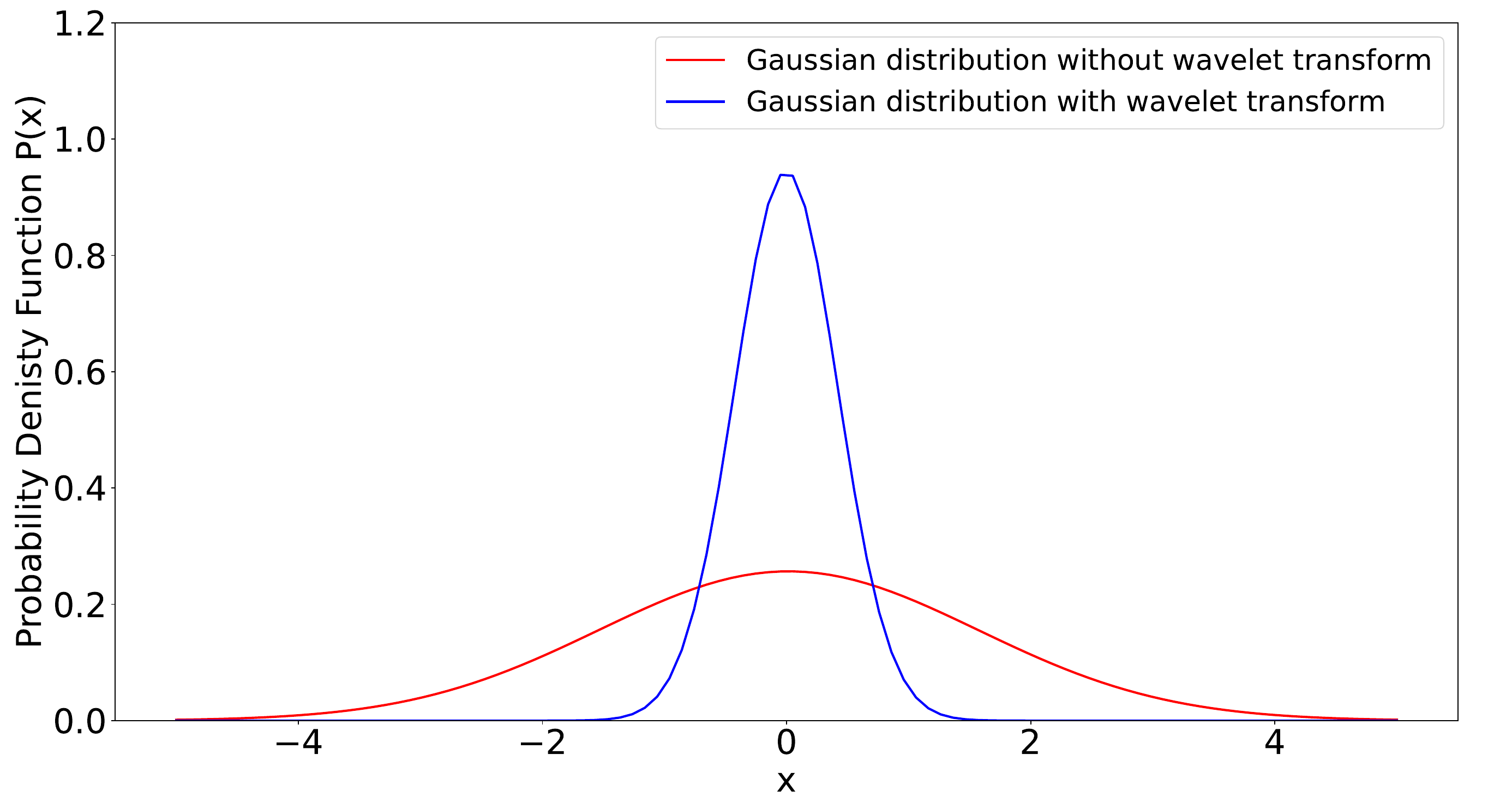}
\caption{Gaussian distribution curves with varying noise variance used in DP training. The red curve represents the original noise variance, while the blue curve represents the reduced noise variance proposed in our method. The x-axis shows the range of noise values, which is a real-valued axis that represents all possible noise values that can be added to the training process, and the y-axis represents the probability density.}
\label{gauss}
\end{figure}

Where L and R denote the number of leaves in the wavelet coefficient H's left and right side subtrees. When considering a single element $X \in G^*$, the maximum number of leaves in the left or right subtree of any coefficient is $2^{(level(H)-1)}$, where $level(H)$ denotes the level of the coefficient H. We now have $L, H \in [0, 2^{(level(H)-1)}]$ As a result, the base coefficient is $\mid(L-R)\mid \leq 2^{(level(H)-1)} $ as $W_{(Haar)}$. The noise variance contributed to $G^*$ by the base coefficient is as follows:
\begin{equation}
m^2.\dfrac{\sigma_{(Haar)}^2}{n^2} = \sigma_{(Haar)}^2.
\end{equation}
Also recall, for every other coefficient $H$, $W_{(Haar)}$ = $2^{level(H)}$. Therefore the noise variance contributed by a single coefficient to $G^*$ is
\begin{equation}
(L-R)^2\dfrac{\sigma_{(Haar)}^2}{(2^{level(H)})^2} = {(2^{(level(H)-1)})}^2\dfrac{\sigma_{(Haar)}^2}{2^{2level(H)}}=\dfrac{\sigma_{(Haar)}^2}{4}.
\end{equation}

Since the number of levels $l=(1+\log_2(m))$ in wavelet transforms, The maximum noise variance at $G^*$ is, 

\begin{equation}
\begin{aligned}
\sigma_{(Haar)}^2 +2l\dfrac{\sigma_{(Haar)}^2}{4} &=\sigma_{(Haar)}^2 +2(1+\log_2(m))\dfrac{\sigma_{(Haar)}^2}{4} \\ &= \dfrac{2+\log_2 (m)}{2} (\sigma_{(Haar)})^2.
\end{aligned}
\end{equation}
\end{proof}

Fig.~\ref{gauss} depicts the empirical probability density function of noise in two methods for a fixed $(\varepsilon, \delta)$, at the noisy gradients. When compared to our method, the standard noise injection scheme's plot is stretched, indicating that the predictions made from these data are far from the actual results, demonstrating how our method provides significantly improved utility in the final model.

\subsection{Differential Privacy Analysis}

\color{black}
In this work, we utilize two well-established privacy accounting techniques, namely the moments accountant \cite{abadi2016deep} and the analytical moments accountant using RDP \cite{wang2019subsampled}, to provide our $(\varepsilon, \delta)$ differential privacy guarantees. We want to clarify that our contribution does not lie in introducing a new privacy loss accounting method but in demonstrating how these techniques can be effectively applied in conjunction with the HWT to achieve a better utility-privacy tradeoff. By leveraging the wavelet-based noise injection mechanism, we are able to reduce the overall noise variance while maintaining the same privacy guarantees as the traditional DP-SGD and DP-FedAvg algorithms.
\color{black}

\begin{lemma}

When moments accountant technique is used, our method guarantees ($\varepsilon, \delta$) differential privacy for any  $\varepsilon<c_1q_c^2T$ and $\delta>0$, while saving a factor of $\sqrt{(2+\log_2 (m))/2}$ in the asymptotic bound from the vanilla DP-SGD algorithm,  if $\sigma_{(Haar)}$ is chosen as, 
\begin{center}
$\sigma_{(Haar)}=c_2\dfrac{K}{\varepsilon}\sqrt{\dfrac{2T\log(1/q_c)}{2+\log_2 (m)}}$, 
\end{center}
where $c_1$ and $c_2$ are constants, the number of steps $T$, sampling probability $q_c=L_n/N$, where $L_n$ and $N$ are the lot size and the size of the input dataset respectively.
\end{lemma}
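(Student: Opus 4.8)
The plan is to reduce the claim to the vanilla moments-accountant guarantee of \cite{abadi2016deep} combined with the variance identity of the first lemma, so that essentially no fresh accountant analysis is required. First I would recall the baseline statement in the paper's notation: the moments accountant certifies $(\varepsilon,\delta)$-DP for vanilla DP-SGD in the regime $\varepsilon<c_1 q_c^2 T$ provided the gradient-domain noise scale $\sigma$ grows like $c_2\frac{K}{\varepsilon}\sqrt{T\log(1/q_c)}$ after $T$ subsampled compositions. The point I would stress is that the Haar transform touches neither the lot subsampling at rate $q_c$ nor the composition over the $T$ rounds; it only changes how the per-step noise is calibrated. Hence the subsampling-amplification and composition parts of the accountant, and with them both the validity condition $\varepsilon<c_1 q_c^2 T$ and the $\sqrt{T\log(1/q_c)}$ dependence, are inherited verbatim, and the whole task collapses to matching a single per-step noise level.

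Next I would invoke post-processing invariance of DP: since the inverse HWT is a fixed, data-independent linear map, releasing the reconstructed noisy gradients $G^*$ is a post-processing of releasing the noisy wavelet coefficients, so the two carry identical privacy. The effective noise that the accountant must see in gradient space is therefore the reconstructed noise, whose variance the first lemma pins down as $\frac{2+\log_2(m)}{2}\,\sigma_{(Haar)}^2$. To obtain the same $(\varepsilon,\delta)$ guarantee as the vanilla algorithm run at scale $\sigma$, I would equate effective variances, $\frac{2+\log_2(m)}{2}\,\sigma_{(Haar)}^2=\sigma^2$, solve for $\sigma_{(Haar)}=\sigma\sqrt{2/(2+\log_2 m)}$, and substitute the baseline requirement on $\sigma$. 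This immediately yields
\[
\sigma_{(Haar)} = c_2\frac{K}{\varepsilon}\sqrt{\frac{2T\log(1/q_c)}{2+\log_2(m)}},
\]
and reading off the ratio $\sigma/\sigma_{(Haar)}=\sqrt{(2+\log_2 m)/2}$ gives exactly the advertised saving factor in the asymptotic bound.

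The step I expect to be the main obstacle is justifying rigorously that the accountant bound transfers with the reconstructed \emph{variance} alone. The injected noise is deliberately non-spherical, since its standard deviation $\sigma_{(Haar)}/W_{(Haar)}$ varies across coefficient levels, and after the inverse transform the gradient-domain noise is correlated rather than the i.i.d.\ $\mathcal{N}(0,\sigma^2 I)$ that the textbook moments accountant assumes. A careful version of the argument would need either (i) to run the accountant directly in the wavelet domain, where the noise is independent per coordinate and the per-coordinate sensitivity is controlled by the weighted clipping of the methodology, and then show the resulting moment bound depends on the coefficients only through the aggregate $\frac{2+\log_2(m)}{2}\sigma_{(Haar)}^2$; or (ii) to whiten the reconstructed covariance and bound its worst-case directional signal-to-noise ratio, verifying that the Haar-domain clipping keeps the sensitivity-to-noise ratio along every direction no larger than in the spherical baseline. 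Establishing this equivalence, rather than the algebra of the substitution, is where the real content of the lemma lies, and I would treat the variance-matching identity as shorthand for whichever of these two calibration arguments is cleaner to carry out under the weighted-clipping scheme.
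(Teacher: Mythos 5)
Your proposal follows essentially the same route as the paper: the paper's proof consists of exactly three steps — cite the Abadi et al.\ requirement $\sigma = c_2\,q_c\sqrt{T\log(1/q_c)}/\varepsilon$ for the regime $\varepsilon < c_1 q_c^2 T$, invoke Lemma 1 in the form $\sigma = \sigma_{(Haar)}\sqrt{(2+\log_2(m))/2}$, and solve for $\sigma_{(Haar)}$ — which is precisely your variance-matching substitution (the factor $K$ in the displayed formula is evidently a typo for $q_c$, inherited by both you and the lemma statement from the solved equation whose left-hand side contains $q_c$). The one substantive remark concerns the obstacle you flag in your last paragraph: you are right that it is the real content of the claim, and you should know that the paper does \emph{not} resolve it. The paper's proof performs the scalar substitution and stops; there is no wavelet-domain accountant argument, no post-processing reduction, and no treatment of the fact that the injected noise has per-coordinate scale $\sigma_{(Haar)}/W_{(Haar)}$ while Algorithm 1 clips the coefficient vector with a single flat $\ell_2$ bound $C$. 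This matters because a Gaussian mechanism with diagonal but non-isotropic covariance and an isotropic $\ell_2$ sensitivity set is governed by its \emph{least}-noisy coordinate (worst-case direction), not by the average reconstructed variance, so the aggregate quantity from Lemma 1 cannot by itself certify the same $(\varepsilon,\delta)$; one of your two repair strategies (accounting in the wavelet domain under genuinely per-level sensitivity control, or whitening and bounding the worst directional signal-to-noise ratio) would be needed to make the lemma rigorous. In short, your proposal is not missing anything the paper supplies; the gap you identify is a gap in the paper's own argument.
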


\begin{proof}
According to \cite{abadi2016deep}, for any $\varepsilon<c_1q_c^2T$ and $\delta>0$, the vanilla DP-SGD algorithm is ($\varepsilon, \delta$) differentially private, and if $\sigma$ is, 
\begin{equation}
    \sigma=c_2\dfrac{q_c\sqrt{T\log(1/q_c)}}{\varepsilon}.
\end{equation}
Lemma 1 establishes that $\sigma=(\sigma_{(Haar)})\sqrt{(2+\log_2 (m))/2}$. As a result, for any $\varepsilon<c_1q_c^2T$ and $\delta>0$, our method is ($\varepsilon, \delta$) differentially private iff, 

\begin{equation}
\begin{split}
(\sigma_{(Haar)})\sqrt{(2+\log_2 (m))/2}=c_2\dfrac{q_c\sqrt{T\log(1/q_c)}}{\varepsilon}, \\
\sigma_{(Haar)}=c_2\dfrac{K}{\varepsilon}\sqrt{\dfrac{2T\log(1/q_c)}{2+\log_2 (m)}},
\end{split}
\end{equation}
since $n\geq1$, it can be observed that $\sqrt{2/(2+\log_2 (m))}\leq$1. At the asymptotic bound, our method saves a factor of $\sqrt{(2+\log_2 (m))/2}$.
\end{proof}

As our modifications are applied prior to the accountant, the results presented above are also applicable to the analytical moments accountants who use RDP \cite{wang2019subsampled}in a ``black-box" manner. Suppose we have a standard mechanism that provides ($\varepsilon, \delta$) differential privacy with a noise variance of $\sigma^2$. To achieve the same ($\varepsilon, \delta$) privacy guarantee, the noise variance at noisy gradients in our method will be $\dfrac{2+\log_2 (m)}{2} (\sigma_{(Haar)})^2=\sigma^2$, saving a factor of $\sqrt{(2+\log_2 (m))/2}$ in the asymptotic bound.
\subsection{Convergence Analysis of DP-SGD-WAV}
\color{black}
In this section, we will examine the proposed algorithm's convergence performance. We accomplish this by analyzing the expected difference in the loss function with Gaussian noise between adjacent aggregations. First, we make the following \textbf{assumptions:}
\begin{enumerate} 
    \item Gradient dissimilarity is bounded: There exists constant $B_1$ and $B_2$ so that $\sum_{k=1}^{K}\{||\triangledown L_k(\theta) - \triangledown L(\theta) ||^2\} \le  B_1||\triangledown L(\theta)||^2 + \dfrac{B_2^2}{K}$.
    \item $L_k(\theta)$ is Lipschitz continuous; i.e. there exists a real constant M so that $||\triangledown L_k(\theta)-\triangledown L_k(\theta')||\le M||\theta-\theta'||$ holds true.
    \item L($\theta$) satisfies Polyak-Lojasiewicz inequality if there exists a positive scalar $\mu>0$ such that $\dfrac{1}{2}||\triangledown L(\theta)||^2 \geq \mu(L(\theta)-L(\theta^*))$. where, $\theta^*$ is the minimizer of $L(\theta)$.

\end{enumerate}

\begin{theorem}
\label{lem_con}
Let $\{\theta_t\}_{t \geq 0}$ be the sequence of model parameters generated by the DP-FedAvg via Wavelets algorithm (Algorithm \ref{dp-fed}). Assume that the local loss functions $L_k(\theta)$ are Lipschitz continuous and the gradient dissimilarity is bounded. Then, the expected difference between the global loss function $L(\theta)$ at iteration $t+1$ and the optimal loss $L(\theta^*)$ is bounded as follows:
\begin{equation}
\label{eq9}
    \begin{aligned}
        \mathbb{E}[L(\theta_{t+1})] - L(\theta^*) \leq \triangle_t\mathbb{E}[L(\theta_t)-L(\theta^*)]+c_t\\+\dfrac{\eta_t}{2}[-1+\lambda M\eta_t(\dfrac{B_1+K}{K})]||\sum^K_{k=1}\dfrac{d_k}{q_cd}\nabla\theta_{t}^k\widehat{L}_k(\theta_{t}, x_{i}^k)||^2 \\+ B_t\sum^{t-1}_{j=t_c+1}\eta_j^2||\sum^K_{k=1}\dfrac{d_k}{q_cd}\nabla\theta_{t}^k\widehat{L}_k(\theta_{t}, x_{i}^k)||^2.
    \end{aligned}
\end{equation}

where,

\begin{equation}
    \triangle_t = 1-\mu\eta_t,
\end{equation}
\begin{equation}
    c_t=\dfrac{\eta_tMB_2^2}{K}[\dfrac{\eta_t}{2}+\dfrac{M(K+1)}{K}\sum^{t-1}_{j=t_c+1}\eta_j^2,
\end{equation}
\begin{equation}
    B_t = \dfrac{\lambda(K+1)\eta_tM^2}{K^2}(B_1+N).
\end{equation}
\end{theorem}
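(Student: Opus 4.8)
The plan is to follow the standard recipe for analyzing an $M$-smooth objective that satisfies the Polyak--{\L}ojasiewicz (PL) inequality under a stochastic update, while carefully tracking two extra sources of error: the zero-mean Gaussian perturbation injected through the wavelet mechanism, and the client drift accumulated over the several local steps taken between two consecutive aggregations (indexed from the last synchronization $t_c+1$ up to $t-1$). First I would invoke Assumption~2 (each $\nabla L_k$ is $M$-Lipschitz, hence so is $\nabla L$) to write the descent lemma
\[
L(\theta_{t+1}) \le L(\theta_t) + \langle \nabla L(\theta_t),\, \theta_{t+1}-\theta_t\rangle + \frac{M}{2}\|\theta_{t+1}-\theta_t\|^2,
\]
and substitute the aggregation update $\theta_{t+1}=\theta_t-\eta_t\sum_{k}\frac{d_k}{q_cd}\nabla\widehat{L}_k(\theta_t,x_i^k)+\xi_t$, where $\xi_t$ is the noise reconstructed from the inverse wavelet transform.

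Next I would take expectation over $\xi_t$. Since the reconstructed noise is zero-mean, the inner-product cross term involving $\xi_t$ vanishes, leaving the linear term $-\eta_t\langle \nabla L(\theta_t),\sum_k\frac{d_k}{q_cd}\nabla\widehat{L}_k\rangle$ and the quadratic term $\tfrac{M\eta_t^2}{2}\|\cdot\|^2$. Splitting the linear term with a Young-type inequality (the free parameter there is the $\lambda$ that appears in the statement) and using the bounded gradient dissimilarity of Assumption~1 to replace $\sum_k\|\nabla L_k-\nabla L\|^2$ by $B_1\|\nabla L\|^2+B_2^2/K$ produces exactly the bracketed descent coefficient $[-1+\lambda M\eta_t(B_1+K)/K]$. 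The quadratic expansion of $\|\theta_{t+1}-\theta_t\|^2$ also generates $\mathbb{E}\|\xi_t\|^2$; here is where the wavelet improvement enters, and I would substitute the reconstructed variance $\frac{2+\log_2 m}{2}\sigma_{(Haar)}^2$ supplied by Lemma~1, feeding it into the additive constant $c_t$. Finally, applying the PL inequality (Assumption~3) to convert $\|\nabla L(\theta_t)\|^2$ into a multiple of $L(\theta_t)-L(\theta^*)$ yields the contraction factor $\triangle_t=1-\mu\eta_t$ that multiplies the prior suboptimality.

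The hard part will be the bookkeeping for the local drift. Because the clients perform multiple local SGD steps per round, the iterate $\theta_t$ at the start of aggregation differs from the points at which the intermediate gradients were evaluated, so the descent lemma alone does not close the recursion. I would control this by recursively bounding $\|\theta_j-\theta_{t_c}\|$ across the inner steps using smoothness, which converts the accumulated drift into a sum $\sum_{j=t_c+1}^{t-1}\eta_j^2\|\sum_k\frac{d_k}{q_cd}\nabla\widehat{L}_k\|^2$ scaled by $M^2$; matching the dissimilarity constants and the number of local steps gives precisely the coefficient $B_t=\frac{\lambda(K+1)\eta_tM^2}{K^2}(B_1+N)$ and the drift contribution to $c_t$. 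Collecting the contraction term $\triangle_t\,\mathbb{E}[L(\theta_t)-L(\theta^*)]$, the bracketed descent coefficient times the aggregated-gradient norm, the drift summation with coefficient $B_t$, and the residual constant $c_t$ then reproduces the claimed inequality; I expect the delicate step to be keeping the indices of the inner summation and the telescoping of drift consistent so that the $B_2^2/K$ variance floor lands in $c_t$ rather than the drift term.
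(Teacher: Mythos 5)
Your high-level skeleton does match the paper's proof: a descent inequality from Assumption 2, a splitting of the inner product $-\eta_t\langle\nabla L(\theta_t),\widehat{g}_t\rangle$, the PL inequality to produce $\triangle_t=1-\mu\eta_t$, Assumption 1 to control the variance terms, and a drift bound over the local steps since the last synchronization index $t_c$ to generate the $\sum_{j=t_c+1}^{t-1}\eta_j^2$ sums. However, two of the steps you describe would not produce the inequality as stated. The first concerns $\lambda$: it is \emph{not} a free parameter of a Young-type inequality. In the paper it is defined as an assumed upper bound on the weighted gradient diversity,
\[
\frac{\sum_{k=1}^K \frac{d_k}{q_cd}\,\bigl\|\nabla\theta_{t}^k\widehat{L}_k(\theta_{t}, x_{i}^k)\bigr\|_2^2}{\bigl\|\sum_{k=1}^K \frac{d_k}{q_cd}\,\nabla\theta_{t}^k\widehat{L}_k(\theta_{t}, x_{i}^k)\bigr\|_2^2}\ \le\ \lambda,
\]
and it enters when bounding $\mathbb{E}\|\widehat{g}_t\|^2$ and the accumulated drift: Assumption 1 and the drift recursion naturally produce \emph{sums of squared local gradient norms}, and the diversity bound is the only device in the proof that converts these into the \emph{squared norm of the aggregated gradient} appearing in (\ref{eq9}). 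A Young split of the linear term cannot substitute for this: Young's inequality $\langle a,b\rangle\le\frac{\epsilon}{2}\|a\|^2+\frac{1}{2\epsilon}\|b\|^2$ yields nonnegative squared terms, which would forfeit both the $-\|\nabla L(\theta_t)\|^2$ needed for the PL contraction and the strictly negative $-1$ inside the bracket. The paper instead uses the exact identity $2\langle a,b\rangle=\|a\|^2+\|b\|^2-\|a-b\|^2$, keeps both negative squares, and bounds the residual $\bigl\|\nabla L(\theta_t)-\sum_{k}\frac{d_k}{q_cd}\nabla\theta_{t}^k\widehat{L}_k(\theta_{t},x_i^k)\bigr\|^2$ by $M^2\sum_{k}\frac{d_k}{q_cd}\|\theta_t-\theta_t^k\|^2$ via Assumption 2, which is where the drift term originates.

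The second gap is your noise bookkeeping, which contradicts the target statement. You plan to isolate the injected noise $\xi_t$, kill the cross term by zero-mean-ness, and deposit $\mathbb{E}\|\xi_t\|^2=\frac{2+\log_2 (m)}{2}\sigma_{(Haar)}^2$ into the additive constant $c_t$; but the theorem's $c_t$ contains only the $B_2^2$ dissimilarity floor and no noise variance at all. The paper never separates the noise from the signal: it is folded into the noisy gradient $\nabla\theta_{t}^k\widehat{L}_k(\theta_{t},x_i^k)=\nabla\theta_{t}^kL_k(\theta_{t},x_i^k)+\mathcal{N}\bigl(0,\frac{2+\log_2 (m)}{2}\sigma_{(Haar)}^2\bigr)$, and the right-hand side of (\ref{eq9}) is expressed through the norms of these \emph{noisy} aggregated gradients. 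Your clean-gradient-plus-explicit-variance decomposition is a legitimate and arguably more transparent bound, but it is a different inequality from the one claimed; to reproduce (\ref{eq9}) verbatim you must carry the noise inside $\widehat{L}_k$ throughout the argument, exactly as the paper does.
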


\begin{proof}
See Appendix~\ref{sec:app1}
\end{proof}
Using theorem~\ref{lem_con}, we can describe the convergence characteristics of the DP-FedAvg via Wavelets algorithm. The lemma provides a bound on the expected difference between the global loss function $L(\theta)$ at iteration $t+1$ and the optimal loss $L(\theta^*)$. This bound is composed of several terms that reflect different aspects of the algorithm's behavior. The term $\triangle_t = 1 - \mu \eta_t$ ensures that as long as $0 < \mu \eta_t < 1$, the algorithm will decrease the error term $\mathbb{E}[L(\theta_t) - L(\theta^*)]$ in expectation, driving the expected loss closer to the optimal loss with each iteration. The term $c_t$ accounts for the influence of noise introduced for differential privacy and the gradient dissimilarity. As the learning rate $\eta_t$ decreases, $c_t$ also diminishes, indicating that the impact of these factors becomes less significant over time.

Additionally, the term 

$\frac{\eta_t}{2} \left[-1 + \lambda M \eta_t \left(\frac{B_1 + K}{K}\right)\right] \left\| \sum_{k=1}^K \frac{d_k}{q_cd} \nabla_{\theta_t^k} \widehat{L}_k(\theta_t, x_i^k) \right\|^2$,

highlights the role of the weighted gradients in the convergence process. If the condition $\lambda M \eta_t \left(\frac{B_1 + K}{K}\right) < 1$ holds, this term contributes negatively, further promoting convergence by reducing the loss. Lastly, the term $B_t \sum_{j=t_c+1}^{t-1} \eta_j^2 \left\| \sum_{k=1}^K \frac{d_k}{q_cd} \nabla_{\theta_t^k} \widehat{L}_k(\theta_t, x_i^k) \right\|^2$ reflects the accumulated effect of past gradients, emphasizing the importance of a proper decay schedule for $\eta_t$ to ensure effective convergence. These components collectively illustrate that the convergence rate of the DP-FedAvg via Wavelets algorithm is influenced by the learning rate, gradient dissimilarity, differential privacy noise, and the weighted contribution of local gradients. By tuning these parameters appropriately, the algorithm can achieve efficient convergence to the optimal solution.

\color{black}

\begin{figure*}[!h]
\centering
\includegraphics[width=\textwidth]{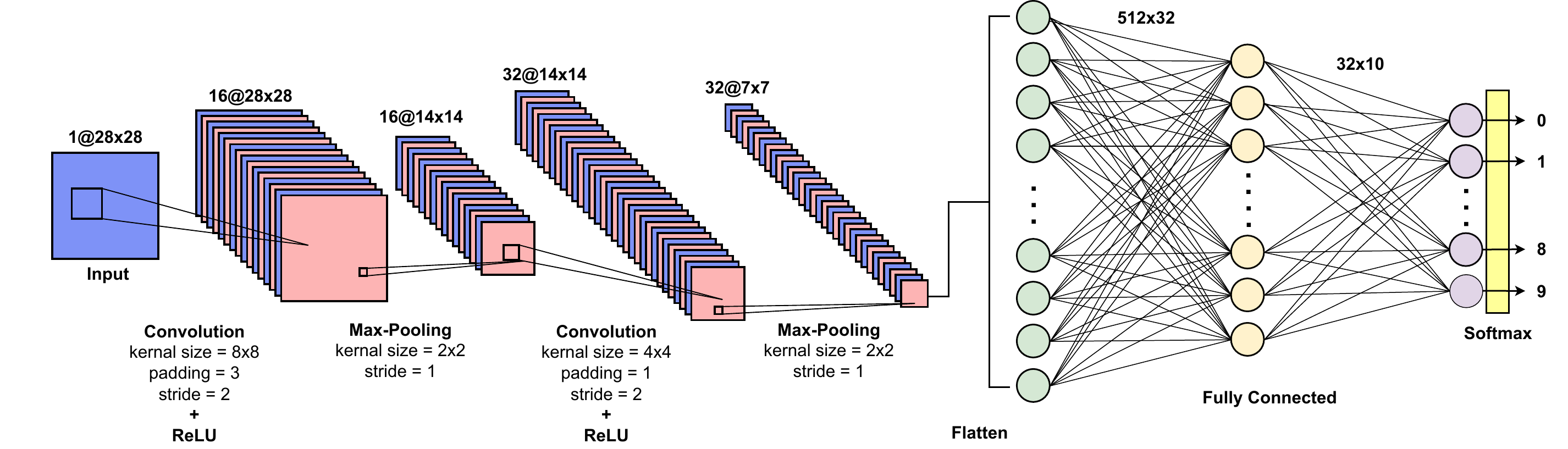}
\caption{Architecture of the Convolutional Neural Network (CNN) used to train on MNIST and Fashion MNIST datasets, consisting of multiple convolutional layers followed by pooling layers and fully connected layers.}
\label{mnist_cnn}
\end{figure*}

\begin{figure*}[!h]
\centering
\includegraphics[width=\textwidth]{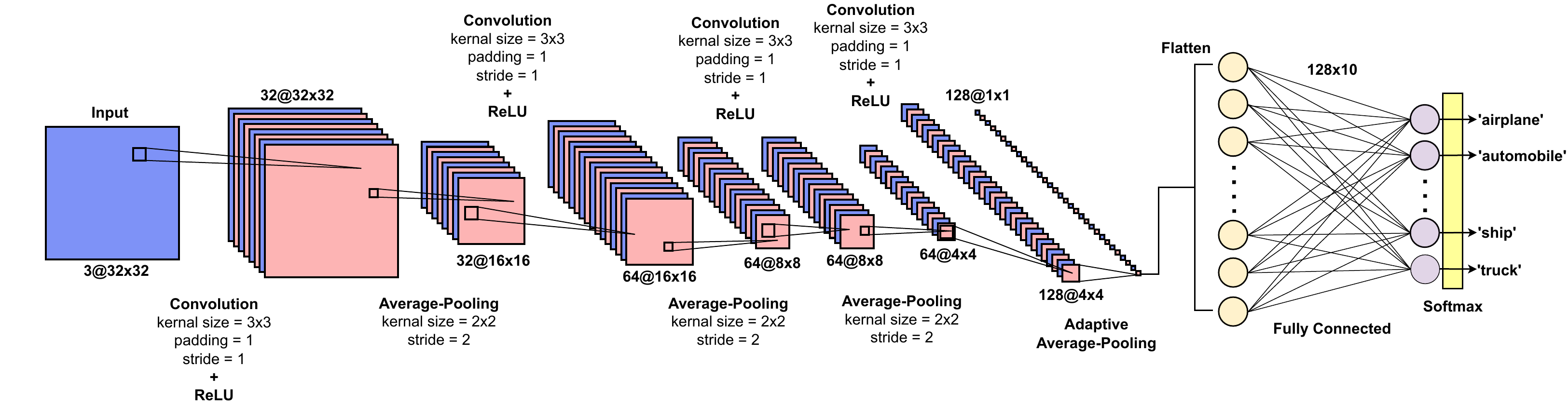}
\caption{Architecture of the Convolutional Neural Network (CNN) used to train on CIFAR10 dataset, consisting of multiple convolutional layers followed by pooling layers and fully connected layers.}
\label{cifar_cnn}
\end{figure*}

From our theoretical analysis, it is obvious that artificial noise with a high variance ($\sigma^2$) may improve the DP's privacy protection performance. However, based on the RHS of (\ref{eq9}), a large $\sigma^2$ may increase the expected difference of the loss function between two consecutive aggregations, causing convergence performance to deteriorate. In order to minimize this deterioration, our proposed algorithm DP-FedAvg-WAV saves a factor of $\dfrac{2+\log_2 (m)}{2} (\sigma_{(Haar)})^2$ in the noise variance added to the model gradients compared to its vanilla algorithm, resulting in superior convergence performance. It is worth noting that when the number of clients $K=1$, DP-FedAvg becomes identical to DP-SGD. As a result, it is reasonable to conclude that these results are also valid for DP-SGD-WAV. We visualize these results via experiments in Section~\ref{sec:ex}.

\begin{figure*}[!t]
\centering
\includegraphics[trim=180 40 170 50,clip,width=\textwidth]{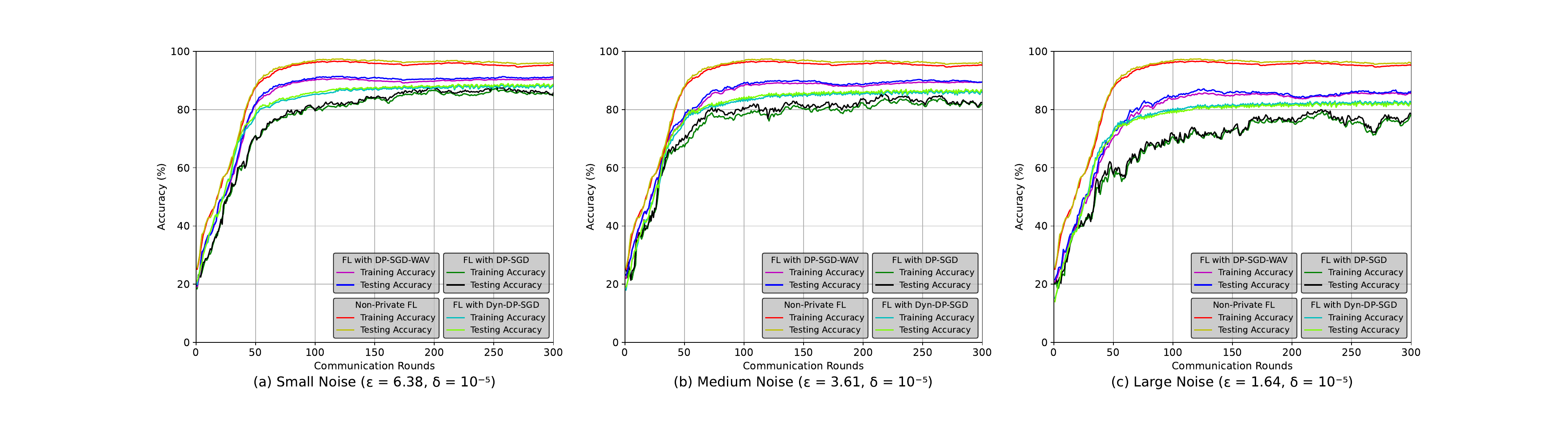}
\caption{The learning accuracies achieved through FL by applying sample-level DP guarantees to local SGD training on the MNIST dataset. DP noise was added under three different privacy settings.}
\label{mnistsgd}
\end{figure*}
\begin{figure*}[!t]
\centering
\includegraphics[trim=180 40 170 50,clip,width=\textwidth]{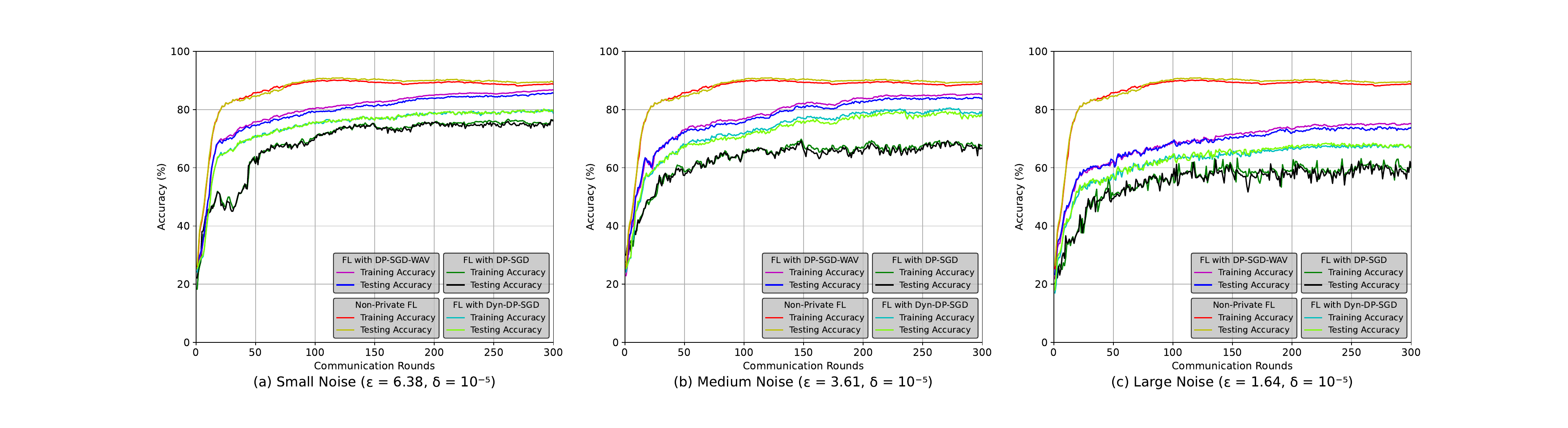}
\caption{The learning accuracies achieved through FL by applying sample-level DP guarantees to local SGD training on the Fashion MNIST dataset. DP noise was added under three different privacy settings.}
\label{fmnistsgd}
\end{figure*}
\begin{figure*}[!t]
\centering
\includegraphics[trim=180 40 170 50,clip,width=\textwidth]{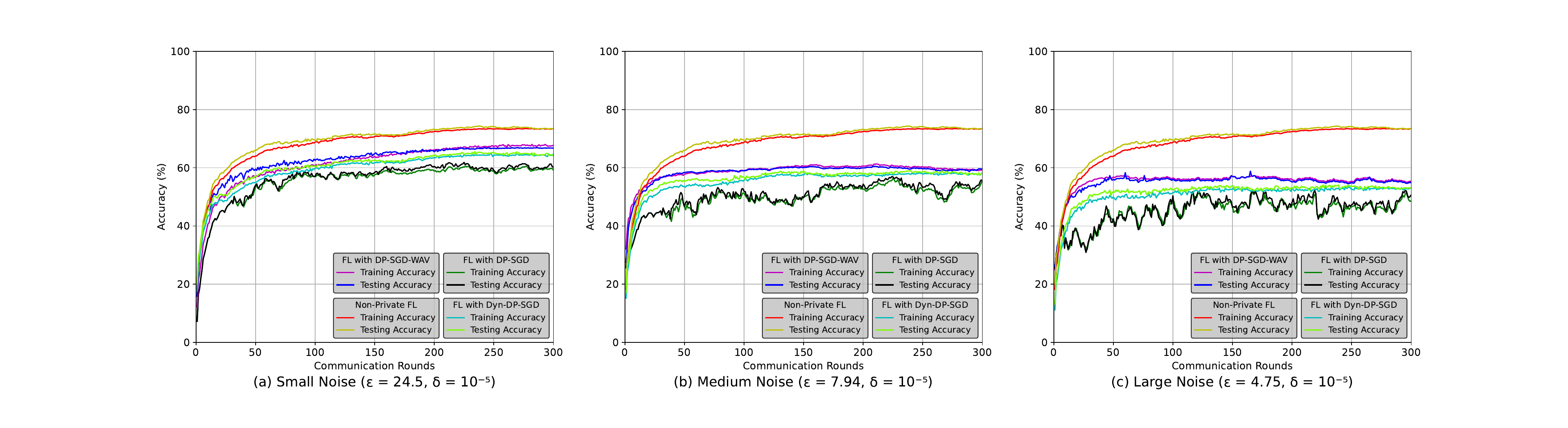}
\caption{The learning accuracies achieved through FL by applying sample-level DP guarantees to local SGD training on the CIFAR10 dataset. DP noise was added under three different privacy settings.}
\label{cifarsgd}
\end{figure*}

\section{Experiments}\label{sec:ex}
In this section, we show that our approach outperforms the state-of-the-art DP algorithms currently available. We do a number of tests on various datasets with various privacy budgets in order to achieve this.
\subsection{Experimental Setup}
Our experiments were conducted in Python, using the PyTorch framework on a system equipped with an Intel Core i7 11th generation processor and NVIDIA RTX 3080 GPU acceleration. To track cumulative privacy loss, we used the RDP accountant from Google's differential privacy libraries. For the MNIST dataset and the Fashion-MNIST dataset, each consisting of 70,000 grayscale images, we employed a convolutional neural network (CNN) model, whose architecture is illustrated in Fig.~\ref{mnist_cnn}. The MNIST dataset contains handwritten digits, while the Fashion-MNIST dataset contains images of clothing items. Both datasets are split into 60,000 training samples and 10,000 test samples, with each image having a resolution of 28x28 pixels. The CIFAR10 dataset, which consists of 60,000 colour images of size 32x32, divided into ten classes, was used to evaluate the performance of our proposed method on larger and more complex images. For this dataset, we used a much deeper CNN model, as outlined in Fig.~\ref{cifar_cnn}. Overall, the datasets were chosen to evaluate our proposed method's effectiveness in different image recognition tasks.

\begin{figure*}[!t]
\centering
\includegraphics[trim=180 40 170 50,clip,width=\textwidth]{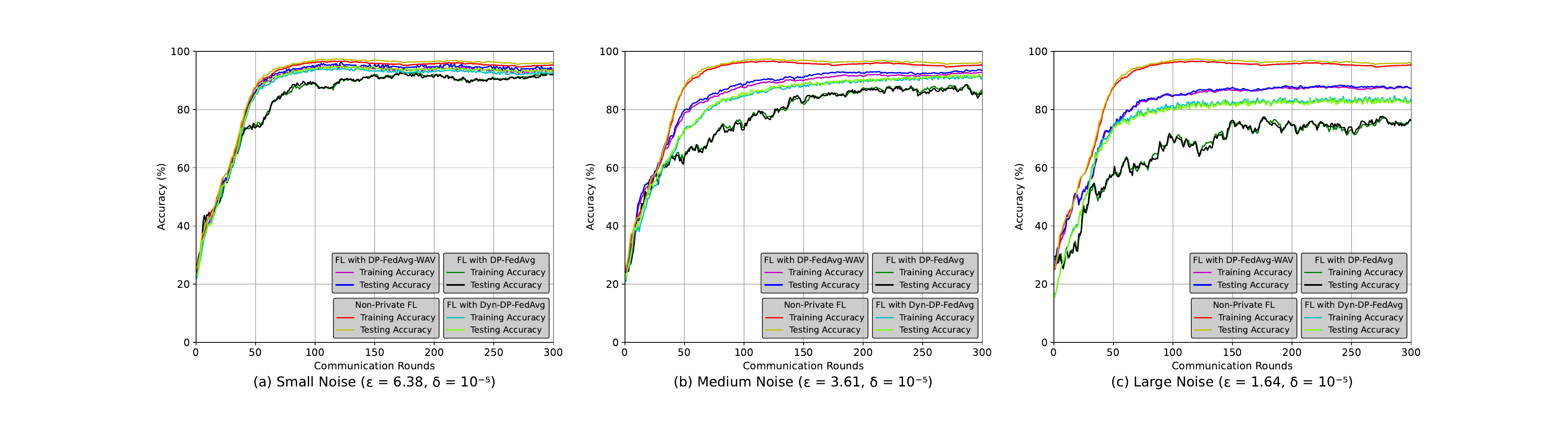}
\caption{The learning accuracies achieved through FL by applying user-level DP guarantees to the aggregation step (FedAvg algorithm) on the MNIST dataset. DP noise was added under three different privacy settings.}
\label{mnistfed}
\end{figure*}
\begin{figure*}[!t]
\centering
\includegraphics[trim=180 40 170 50,clip,width=\textwidth]{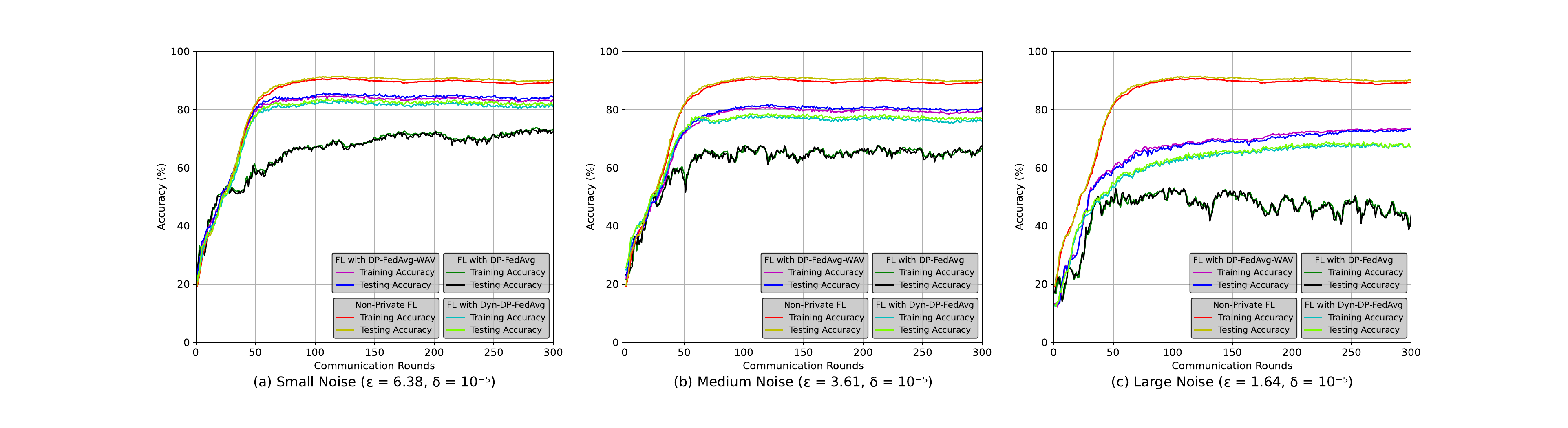}
\caption{The learning accuracies achieved through FL by applying user-level DP guarantees to the aggregation step (FedAvg algorithm) on the Fashion MNIST dataset. DP noise was added under three different privacy settings.}
\label{fmnistfed}
\end{figure*}
\begin{figure*}[!t]
\centering
\includegraphics[trim=180 40 170 50,clip,width=\textwidth]{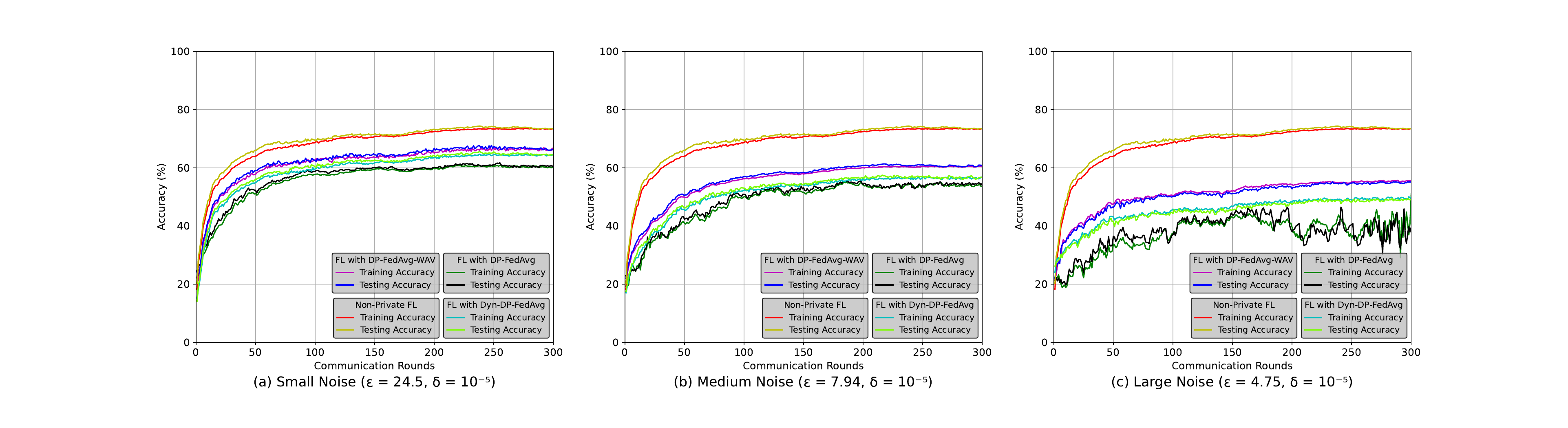}
\caption{The learning accuracies achieved through FL by applying user-level DP guarantees to the aggregation step (FedAvg algorithm) on the CIFAR10 dataset. DP noise was added under three different privacy settings.}
\label{cifarfed}
\end{figure*}

All hyperparameters used in these experiments were selected through a rigorous hyperparameter tuning process. We tested a wide range of values for each hyperparameter, and the best possible hyperparameters were selected based on their impact on the model's performance, privacy guarantees, and computational efficiency. We tuned various hyperparameters, including the learning rate, batch size, weight decay, number of layers, number of filters, and kernel size. Additionally, we experimented with different values of the privacy parameter epsilon to ensure sufficient privacy protection while maintaining acceptable model accuracy. The hyperparameter tuning process was conducted using a combination of manual and automated methods. For the manual tuning, we employed our domain knowledge and prior experience to select the hyperparameters. For the automated tuning, we used a popular tool, Grid Search.

\color{black}In our implementation, the median clipping norm is calculated only during the initial rounds of training using a public proxy dataset. Since the client data in our FL framework is i.i.d., this ensures that the clipping threshold is representative of the client data distribution. The median clipping value is then applied consistently throughout the training process, providing a fixed sensitivity bound.\color{black} The noise scales $\sigma$ were set to 1, 1.5, and 3 for small, medium, and large noise settings, respectively. The resulting $\epsilon$ values are recorded in Table~\ref{tab:opti} with $\delta$ set to $10^{-5}$.\color{black}
The FL process was conducted using 100 clients, where the datasets were divided equally among them. \color{black} The training was carried out for 300 communication rounds, with each client performing 50 local training epochs per round, and the global model was evaluated after every epoch. \color{black}

\subsection{Baselines}

To evaluate the effectiveness of our proposed method, we compared it against three baselines, which are widely used in the literature for federated learning with differential privacy. The first baseline is non-private federated learning \cite{mcmahan2017communication}. In this approach, the clients send their model updates to the central server, which aggregates the updates and broadcasts the new model to all clients. This baseline serves as a performance benchmark for our proposed method. In experiments conducting sample-level privacy for federated learning settings, we used DP-SGD as the baseline for differential privacy. In this approach, differential privacy is introduced at the sample level by adding noise to the gradients. On the other hand, in experiments conducting client-level privacy for federated learning settings, we used DP-FedAvg as the baseline for differential privacy. In this approach, differential privacy is introduced at the client level by adding noise to the model updates. The third baseline is federated learning with dynamic DP-SGD (Dyn-DP-SGD) or Dynamic FedAvg (Dyn-DP-FedAvg) \cite{du2021dynamic}, depending on the experiment setting. In the sample-level privacy setting, we use Dyn-DP-SGD, where the noise added to the gradients is adapted dynamically based on the sensitivity of the data. In the client-level privacy setting, we use Dyn-DP-FedAvg, where the noise added in the aggregation is adapted based on the number of participating clients. 

\subsection{FL with Sample-Level DP}

\begin{table*}[t!]
\centering
\caption{Accuracy improvement our method provides when DP-Adam and DP-AdaGrad optimizers are used in place of DP-SGD, Here $\delta=10^{-5}$.}
\label{tab:opti}
\begin{tabular}{|c|ccc|ccc|ccc|}
\hline
                                         & \multicolumn{3}{c|}{\cellcolor[HTML]{FFFFFF}{\color[HTML]{000000} MNIST}}                                                                           & \multicolumn{3}{c|}{\cellcolor[HTML]{FFFFFF}{\color[HTML]{000000} Fashion MNIST}}                                                                   & \multicolumn{3}{c|}{\cellcolor[HTML]{FFFFFF}{\color[HTML]{000000} CIFAR10}}                                                                         \\ \cline{2-10} 
                                         & \multicolumn{3}{c|}{\cellcolor[HTML]{C0C0C0}DP property}                                                                                & \multicolumn{3}{c|}{\cellcolor[HTML]{C0C0C0}DP property}                                                                                & \multicolumn{3}{c|}{\cellcolor[HTML]{C0C0C0}DP property}                                                                                \\ \cline{2-10} 
\multirow{-3}{*}{\begin{tabular}[c]{@{}c@{}}Optimization \\ Algorithm\end{tabular}} & \multicolumn{1}{c|}{\cellcolor[HTML]{C0C0C0}$\varepsilon=6.38$} & \multicolumn{1}{c|}{\cellcolor[HTML]{C0C0C0}$\varepsilon=3.61$} & \cellcolor[HTML]{C0C0C0}$\varepsilon=1.64$ & \multicolumn{1}{c|}{\cellcolor[HTML]{C0C0C0}$\varepsilon=6.38$} & \multicolumn{1}{c|}{\cellcolor[HTML]{C0C0C0}$\varepsilon=3.61$} & \cellcolor[HTML]{C0C0C0}$\varepsilon=1.64$ & \multicolumn{1}{c|}{\cellcolor[HTML]{C0C0C0}$\varepsilon=24.5$} & \multicolumn{1}{c|}{\cellcolor[HTML]{C0C0C0}$\varepsilon=7.94$} & \cellcolor[HTML]{C0C0C0}$\varepsilon=4.75$ \\ \hline
DP-Adam                                  & \multicolumn{1}{c|}{3.56\%}                           & \multicolumn{1}{c|}{4.9\%}                             & 6.12\%                             & \multicolumn{1}{c|}{8.22\%}                           & \multicolumn{1}{c|}{10.23\%}                           & 12.97\%                            & \multicolumn{1}{c|}{6.01\%}                           & \multicolumn{1}{c|}{6.83\%}                            & 7.93\%                             \\ \hline
DP-AdaGrad                               & \multicolumn{1}{c|}{2.31\%}                           & \multicolumn{1}{c|}{3.29\%}                            & 4.97\%                             & \multicolumn{1}{c|}{6.61\%}                           & \multicolumn{1}{c|}{8.19\%}                            & 10.88\%                            & \multicolumn{1}{c|}{4.25\%}                           & \multicolumn{1}{c|}{5.53\%}                            & 7.03\%                             \\ \hline
\end{tabular}
\end{table*}

The learning accuracies achieved through FL with sample-level DP on three different datasets, MNIST, Fashion MNIST, and CIFAR-10, are presented in Figures \ref{mnistsgd}, \ref{fmnistsgd}, and \ref{cifarsgd} respectively. The figures show the performance of our proposed method, FL with DP-SGD-WAV, compared to three baseline approaches: vanilla DP-SGD \cite{abadi2016deep}, dynamic DP-SGD \cite{du2021dynamic}, and non-private federated learning \cite{mcmahan2017communication}. The non-private FL serves as a performance benchmark in our evaluation. We observe that our proposed method outperforms the state-of-the-art approaches across all three noise settings, small, medium, and large, demonstrating its effectiveness in improving the utility of DP training.

In the MNIST dataset, we observed that our proposed method outperformed the three baseline approaches across all three noise settings. Specifically, in the small noise setting, our proposed method achieves a 4.9\% increase in accuracy compared to vanilla DP-SGD, and a 2.87\% increase compared to dynamic DP-SGD. In the medium noise setting, we observe a similar trend with a 7.03\% and 4.2\% increase in accuracy compared to vanilla DP-SGD and dynamic DP-SGD, respectively. In the large noise setting, our proposed method again outperforms the baselines with a 9.16\% increase in accuracy compared to vanilla DP-SGD and a 4.82\% increase compared to dynamic DP-SGD.

For the Fashion MNIST dataset, our proposed method also outperformed the three baseline approaches across all three noise settings. In the small noise setting, our proposed method achieves a 9.93\% increase in accuracy compared to vanilla DP-SGD, and a 5.64\% increase compared to dynamic DP-SGD. In the medium noise setting, we observe a similar trend with a 12.7\% and 5.53\% increase in accuracy compared to vanilla DP-SGD and dynamic DP-SGD, respectively. In the large noise setting, our proposed method again outperforms the baselines with a 14.6\% increase in accuracy compared to vanilla DP-SGD and a 5.35\% increase compared to dynamic DP-SGD.

Finally, for the CIFAR-10 dataset, our proposed method also outperformed the three baseline approaches across all three noise settings. In the small noise setting, our proposed method achieves a 6.38\% increase in accuracy compared to vanilla DP-SGD, and a 1.871\% increase compared to dynamic DP-SGD. In the medium noise setting, we observe a similar trend with a 7.2\% and 1.88\% increase in accuracy compared to vanilla DP-SGD and dynamic DP-SGD, respectively. In the large noise setting, our proposed method again outperforms the baselines with a 9.1\% increase in accuracy compared to vanilla DP-SGD and a 2.05\% increase compared to dynamic DP-SGD.

These results demonstrate that our proposed method, FL with DP-SGD-WAV, achieves improved accuracy compared to existing sample-level DP methods in FL. The wavelet-based noise added to the gradients in our proposed method provides higher accuracies while maintaining privacy guarantees. This is congruent with our theoretical proof, in which we demonstrated that using the wavelet transform reduces the noise variance bounds by a factor of $\dfrac{2+\log_2 (m)}{2} (\sigma_{(Haar)})^2$. 

In addition to our experimental results, 
we extend our work to other optimizers such as DP-Adam \cite{kingma2014adam} and DP-AdaGrad \cite{duchi2011adaptive} to prove that the wavelet technique works with any optimizer in a black-box way to provide sample-level DP guarantees for FL applications. Similarly to the previous experiments, we run these in the MNIST, Fashion MNIST and CIFAR10 datasets with three different noise scales, with $\delta$ fixed to $10^{-5}$ and $\varepsilon$ varied to have varying values.
Table \ref{tab:opti} summarizes the accuracy increase on these two optimizers for large, medium  and small noise settings. It can be observed, the reduction of the noise variance that the HWT causes, by a factor of $\dfrac{2+\log_2 (m)}{2} (\sigma_{(Haar)})^2$ boosts the accuracies of these optimization techniques in a manner similar to the DP-SGD algorithm.

\subsection{FL with Client-Level DP}

We evaluated the effectiveness of our proposed DP-FedAvg-WAV, in improving learning accuracy while maintaining privacy guarantees in client-level FL settings. We conducted experiments on three datasets, MNIST, Fashion MNIST, and CIFAR-10, and compared our method to three baseline approaches: vanilla DP-FedAvg\cite{mcmahan2017learning}, Dyn-DP-FedAvg~\cite{du2021dynamic}, and non-private FL. Similar to the sample-level experiments the non-private FL served as a performance benchmark in our evaluation.

Fig.~\ref{mnistfed} shows our method achieved a 3.64\% increase in accuracy compared to vanilla DP-FedAvg and a 1.21\% increase compared to Dyn-DP-FedAvg in the small noise setting for the MNIST dataset. In the medium noise setting, we observed a similar trend with a 6.14\% and 1.82\% increase in accuracy compared to vanilla DP-FedAvg and Dyn-DP-FedAvg, respectively. In the large noise setting, our method again outperformed the baselines with a 12.97\% increase in accuracy compared to vanilla DP-FedAvg and a 4.93\% increase compared to Dyn-DP-FedAvg. Similar improvements were observed in Fig.~\ref{fmnistfed} for the Fashion MNIST dataset. Our method achieved a 13.2\% increase in accuracy compared to vanilla DP-FedAvg and a 2.4\% increase compared to Dyn-DP-FedAvg in the small noise setting. In the medium noise setting, we observed a 14.82\% and 3.08\% increase in accuracy compared to vanilla DP-FedAvg and Dyn-DP-FedAvg, respectively. In the large noise setting, our method shows a 27.3\% increase in accuracy compared to vanilla DP-FedAvg and a 4.39\% increase compared to Dyn-DP-FedAvg. Fig.~\ref{cifarfed} shows the learning accuracies for FL with client-level DP, trained with the CIFAR10 dataset. similar to the previous experiments our method achieved a 5.96\% increase in accuracy compared to vanilla DP-FedAvg and a 1.987\% increase compared to Dyn-DP-FedAvg in the small noise setting. In the medium noise setting, a 6.67\% and 4.02\% increase in accuracy compared to vanilla DP-FedAvg and Dyn-DP-FedAvg, respectively was observed. In the large noise setting, our method shows a 16.33\% increase in accuracy compared to vanilla DP-FedAvg and a 5.94\% increase compared to Dyn-DP-FedAvg.

In these experiments, the clipping was taken place at the local training process in order to bind the sensitivity of the gradients. The noise was injected at the global aggregation process, which was calibrated to the local clipping norms. The wavelet transform takes place in the global aggregation, reducing the noise variance that's been injected into the model updates in a similar manner to the sample-level approach. This brings our experimental results to a close, demonstrating that our assertions are consistent with the experimental and theoretical results.

\section{Conclusion}\label{sec:co}

In this paper, 
we proposed a modification to the vanilla DP-SGD and DP-FedAvg algorithms by using a Haar wavelet transform. 
This modification ensures the same $(\varepsilon, \delta)$ differential privacy and provides significantly higher utility than the vanilla algorithms and recent improvements upon them. 
We provided theoretical guarantees on improved noise variance bounds and demonstrated improved utility using three benchmark datasets. 
However, 
our approach is slightly more computationally expensive than the vanilla algorithms due to the increased number of computations. 
Although our method shows higher utility, 
the convergence rate of the model remains the same, 
which means that more training epochs are required for the model to converge compared to the traditional method. 
This is a result of our method converging to much greater accuracy. 
Therefore, 
in future work, 
we plan to investigate the effects of applying unequal noise to different coefficients of the Haar transform. \color{black}Also, while the current median clipping approach performs well in i.i.d. settings, future work could explore adaptive clipping techniques. which dynamically adjusts the clipping threshold during training. This approach would be particularly beneficial in non-i.i.d. settings, where client data distributions vary significantly.\color{black}


%

\ifCLASSOPTIONcaptionsoff
  \newpage
\fi



%
\bibliographystyle{IEEEtran}
\bibliography{main}
\balance

\newpage
\appendix

\subsection{Proof of theorem~\ref{lem_con} }
\label{sec:app1}
\color{black}
\textbf{Theorem: }
Let $\{\theta_t\}_{t \geq 0}$ be the sequence of model parameters generated by the DP-FedAvg via Wavelets algorithm (Algorithm \ref{dp-fed}). Assume that the local loss functions $L_k(\theta)$ are Lipschitz continuous and the gradient dissimilarity is bounded. Then, the expected difference between the global loss function $L(\theta)$ at iteration $t+1$ and the optimal loss $L(\theta^*)$ is bounded as follows:

\begin{equation}
    \begin{aligned}
        \mathbb{E}[L(\theta_{t+1})] - L(\theta^*) \leq \triangle_t\mathbb{E}[L(\theta_t)-L(\theta^*)]+c_t\\+\dfrac{\eta_t}{2}[-1+\lambda M\eta_t(\dfrac{B_1+K}{K}]||\sum^K_{k=1}\dfrac{d_k}{q_cd}\nabla\theta_{t}^k\widehat{L}_k(\theta_{t}, x_{i}^k)||^2 \\+ B_t\sum^{t-1}_{j=t_c+1}\eta_j^2||\sum^K_{k=1}\dfrac{d_k}{q_cd}\nabla\theta_{t}^k\widehat{L}_k(\theta_{t}, x_{i}^k)||^2.
    \end{aligned}
\end{equation}

where,

\begin{equation}
    \triangle_t = 1-\mu\eta_t,
\end{equation}
\begin{equation}
    c_t=\dfrac{\eta_tMB_2^2}{K}[\dfrac{\eta_t}{2}+\dfrac{M(K+1)}{K}\sum^{t-1}_{j=t_c+1}\eta_j^2,
\end{equation}
\begin{equation}
    B_t = \dfrac{\lambda(K+1)\eta_tM^2}{K^2}(B_1+N).
\end{equation}

\begin{proof}

    From algorithm~\ref{dp-fed}'s local update rule we know that
    \begin{equation}
    \label{eq:}
        \theta_{t+1}^k = \theta_t^k - \eta_t({g}^k_t(x_{i}^k)).
    \end{equation}

    where, $g^k_t$($x_{i}^k$) = $\nabla\theta_{t}^kL_k(\theta_{t}, x_{i}^k) $,

    also, we know the FedAvg step;
    \begin{equation}
    \theta_{t+1} = \sum_{k=1}^K \dfrac{d_k}{q_cd}\theta_{t+1}^k.
    \end{equation}

   Let, 
   \begin{equation}
   \begin{aligned}
   \widehat{g}_t = &\sum_{k=1}^K \dfrac{d_k}{q_cd}\nabla\theta_{t}^kL_k(\theta_{t}, x_{i}^k) \\&+ \mathcal{N}(0, \dfrac{2+\log_2 (m)}{2} (\sigma_{(Haar)})^2) .     
   \end{aligned}
   \end{equation}
    
Since we assumed $L_k(\theta)$ is Lipschitz continuous

    \begin{equation}
        L(\theta_{t+1})-L(\theta_t) \leq -\eta_t\langle\nabla L(\theta_t),\widehat{g}_t\rangle + \dfrac{\eta_t^2M}{2}||\widehat{g}_t||^2.
    \end{equation}

    By taking expectation on both sides of above inequality overs sampling of devices $q_c$, we get

    \begin{equation}
    \begin{aligned}
    \label{eq:100}
        \mathbb{E}[\mathbb{E}_{k in K} [L(\theta_{t+1})-L(\theta_t)]] \\\leq -\eta_t\mathbb{E}[\mathbb{E}_{k in K}[\langle\nabla L(\theta_t),\widehat{g}_t\rangle]] + \dfrac{\eta_t^2M}{2}\mathbb{E}[\mathbb{E}_{k in K}[||\widehat{g}_t||^2]].
    \end{aligned}
    \end{equation}

We define $g_t$ as the full gradient of the local objective function at iteration $t$ where $\widehat{g}_t$ is an unbiased estimator of the full gradient $g_t$. 

Given the definitions of $g_t$ and $\widehat{g}_t$
, and considering that mini-batches are selected independently and identically distributed (i.i.d.) at each local machine, it follows that

\begin{equation}
\begin{aligned}
    \mathbb{E} \left[ \| \widehat{g}_t - g_t \|^2 \right] = \mathbb{E} \left[ \left\| \frac{1}g_t^k \mathbb{E}_{k in K} \widehat{g}_t^k - \frac{1}g_t^k \mathbb{E}_{k in K} g_t^{k} \right\|^2 \right]
     =\\ \frac{1}{K^2} \mathbb{E} \left[ \mathbb{E}_{k in K} \| (\widehat{g}_t^k - g_t^{k}) \|^2 + \sum_{i \neq k} \langle \widehat{g}_t^{i} - g_t^{i}, \widehat{g}_t^k - g_t^{k} \rangle \right]
     =\\ \frac{1}{K^2} \mathbb{E}_{k in K} \mathbb{E} \left[ \| (\widehat{g}_t^k - g_t^{k}) \|^2 \right] + \frac{1}{K^2} \sum_{i \neq k} \mathbb{E} \left[ \langle \widehat{g}_t^k - g_t^{k}, \widehat{g}_t^{i} - g_t^{i} \rangle \right]
     \\\leq \frac{1}{K^2} \mathbb{E}_{k in K} \mathbb{E} \left[ \| (\widehat{g}_t^k - g_t^k) \|^2 \right] \\+ \frac{1}{K^2} \sum_{i \neq k} \langle \mathbb{E} \left[ \widehat{g}_t^k - g_t^k \right], \mathbb{E} \left[ \widehat{g}_t^{i} - g_t^{i} \right] \rangle
     \\\leq \frac{1}{K^2} \mathbb{E}_{k in K} \left[ B_1 \| g_t^k \|^2 + B_2^2 \right]
     = \frac{B_1}{K^2} \mathbb{E}_{k in K} \| g_t^k \|^2 + \frac{B_2^2}{g_t^k}.
\end{aligned}
\end{equation}

Next, considering the expectation over the random sampling of devices on both sides of the above equation, we derive:

\begin{equation}
\begin{aligned}
    \mathbb{E}_{k in K} \left[ \mathbb{E} \left[ \| \widehat{g}_t - g_t \|^2 \right] \right] \leq \mathbb{E}_{k in K} \left[ \frac{B_1}{K^2} \mathbb{E}_{k in K} \| g_t^k \|^2 + \frac{B_2^2}g_t^k \right]
    \\= \frac{B_1}{K^2} \mathbb{E}_{k in K} \left[ \mathbb{E}_{k in K} \| g_t^k \|^2 \right] + \frac{B_2^2}{g_t^k}
    \\= \frac{B_1}{K^2} K \sum_{K=1}^K \dfrac{d_k}{q_cd} \| g_t^k \|^2 + \frac{B_2^2}{g_t^k}.
\end{aligned}
\end{equation}

Now, we note that $\mathbb{E}[\widehat{g}_t^k] = g_t^k$, from which we have
\begin{equation}
\begin{aligned}
    \mathbb{E}[\|\widehat{g}_t\|^2] & = \mathbb{E}[\|\widehat{g}_t - \mathbb{E}[\widehat{g}_t]\|^2] + \|\mathbb{E}[\widehat{g}_t]\|^2 \\
    & = \mathbb{E}[\|\widehat{g}_t - g_t\|^2] + \|g_t\|^2 \\
    & \leq \frac{B_1}{K^2} \mathbb{E}_{k in K} \|g_t^k\|^2 + \frac{B_2^2}{K} + \left\|\frac{1}{K} \mathbb{E}_{k in K} g_t^k\right\|^2 \\
    & \leq \frac{B_1}{K^2} \mathbb{E}_{k in K} \|g_t^k\|^2 + \frac{B_2^2}{K} + \frac{1}{K} \mathbb{E}_{k in K} \|g_t^k\|^2 \\
    & = \left(\frac{B_1+K}{K^2}\right) \mathbb{E}_{k in K} \|g_t^k\|^2 + \frac{B_2^2}{K},
\end{aligned}
\end{equation}
where we used the fact that $\left\|\sum_{i=1}^{m} a_i\right\|^2 \leq m \sum_{i=1}^{m} \|a_i\|^2$.

Applying the assumption that gradient dissimilarity is bounded, the second term on the right-hand side of \ref{eq:100} can be upper-bounded as:

\begin{equation}
    \begin{aligned}
    \label{eq:101}
    \mathbb{E}[\mathbb{E}_{k in K}[\|\widehat{g}_t\|^2]] \\ \leq \left(\frac{B_1+K}{K^2}\right) \left[\sum_{k=1}^{K} \frac{d_k}{q_c d} \|\nabla\theta_{t}^k \widehat{L}_k(\theta_{t}, x_{i}^k)\|^2\right] + \frac{B_2^2}{K} \\
    \leq \lambda \left(\frac{B_1+K}{K^2}\right) \left\|\sum_{k=1}^{K} \frac{d_k}{q_c d} \nabla\theta_{t}^k \widehat{L}_k(\theta_{t}, x_{i}^k)\right\|^2 + \frac{B_2^2}{K}.
    \end{aligned}
\end{equation}

where, 

\begin{equation*}
\begin{aligned}
    \nabla\theta_{t}^k \widehat{L}_k(\theta_{t}, x_{i}^k)=\\\nabla\theta_{t}^kL_k(\theta_{t}, x_{i}^k) + \mathcal{N}(0, \dfrac{2+\log_2 (m)}{2} (\sigma_{(Haar)})^2),
\end{aligned}
\end{equation*}
and $\lambda$ is the upper bound over the weighted gradient diversity, i.e

\begin{equation}
    \dfrac{\sum_{k=1}^K \dfrac{d_k}{q_cd}||\nabla\theta_{t}^k\widehat{L}_k(\theta_{t}, x_{i}^k)||_2^2}{||\sum_{k=1}^K \dfrac{d_k}{q_cd}\nabla\theta_{t}^k\widehat{L}_k(\theta_{t}, x_{i}^k)||^2_2} \leq \lambda.
\end{equation}

now we move on to bound the first term of \ref{eq:100}.

Define \( \widehat{g}^{(t)} = \frac{1}{K} \sum_{k=1}^K \widehat{g}^{(t)}_k \) as the average of their local stochastic gradients at iteration \( t \). Thus, we have:

\begin{equation}
    \begin{aligned}
        -\mathbb{E}_{i \in L_n} \mathbb{E}_{k \in K} \left[ \left\langle \nabla L(\theta_t), \widehat{g}^{(t)} \right\rangle \right] 
        \\ = -\mathbb{E}_{i \in L_n} \mathbb{E}_{k \in K} \left[ \left\langle \nabla L(\theta_t), \frac{1}{K} \sum_{k=1}^K \widehat{g}^{(t)}_j \right\rangle \right].
    \end{aligned}
\end{equation}

Note that the order of taking expectation follows from the fact that devices are chosen first and thereafter the stochastic mini-batch gradients are computed, and noting the fact that devices are agnostic to the random selection at every communication round., we can write;

\begin{equation}\begin{aligned}
-\mathbb{E}_{i \in L_n} \mathbb{E}_{k \in K} \left[ \left\langle \nabla L(\theta_t), \frac{1}{K} \sum_{k=1}^K \widehat{g}^{(t)}_j \right\rangle \right]
\\= -\mathbb{E}_{k \in K} \mathbb{E}_{i \in L_n} \left[ \left\langle \nabla L(\theta_t), \frac{1}{K} \sum_{k=1}^K \widehat{g}^{(t)}_j \right\rangle \right]
\\= -\left\langle \nabla L(\theta_t), \mathbb{E}_{k in K} \left[ \frac{1}{K} \sum_{k=1}^K \mathbb{E}_t \left[\widehat{g}_j \right] \right] \right\rangle
\\= -\left\langle \nabla L(\theta_t), \mathbb{E}_{k in K} \left[ \frac{1}{K} \sum_{k=1}^K \nabla\theta_{t}^k \widehat{L}_k(\theta_{t}, x_{i}^k) \right] \right\rangle
\\= -\left\langle \nabla L(\theta_t), \frac{1}{K} \mathbb{E}_{k in K} \left[ \sum_{k=1}^K \nabla\theta_{t}^k \widehat{L}_k(\theta_{t}, x_{i}^k) \right] \right\rangle
\\= -\left\langle \nabla L(\theta_t), \frac{1}{K} \left[ K \sum_{k=1}^K \dfrac{d_k}{q_cd} \nabla\theta_{t}^k \widehat{L}_k(\theta_{t}, x_{i}^k) \right] \right\rangle
\\= -\left\langle \nabla L(\theta_t), \sum_{k=1}^K \dfrac{d_k}{q_cd} \nabla\theta_{t}^k \widehat{L}_k(\theta_{t}, x_{i}^k) \right\rangle.
\end{aligned} \end{equation}

Based on the identity \( 2\langle a, b \rangle = \|a\|^2 + \|b\|^2 - \|a - b\|^2 \), we can write;
\begin{equation*}\begin{aligned}
-\left\langle \nabla L(\theta_t), \sum_{k=1}^K \dfrac{d_k}{q_cd} \nabla\theta_{t}^k \widehat{L}_k(\theta_{t}, x_{i}^k) \right\rangle
\\= \frac{1}{2} \biggl[ -\|\nabla L(\theta_t)\|^2 - \left\|\sum_{k=1}^K \dfrac{d_k}{q_cd} \nabla\theta_{t}^k \widehat{L}_k(\theta_{t}, x_{i}^k) \right\|^2 \\+ \left\|\nabla L(\theta_t) - \sum_{k=1}^K \dfrac{d_k}{q_cd} \nabla\theta_{t}^k \widehat{L}_k(\theta_{t}, x_{i}^k) \right\|^2 \biggr]
\\= \frac{1}{2} \biggl[ -\|\nabla L(\theta_t)\|^2 - \left\|\sum_{k=1}^K \dfrac{d_k}{q_cd} \nabla\theta_{t}^k \widehat{L}_k(\theta_{t}, x_{i}^k) \right\|^2 \\+ \left\| \sum_{k=1}^K \dfrac{d_k}{q_cd} (\nabla\theta_{t}\widehat{L}_k(\theta_{t}, x_{i}^k) - \nabla\theta_{t}^k \widehat{L}_k(\theta_{t}, x_{i}^k)) \right\|^2 \biggr]
\end{aligned} \end{equation*} 

\begin{equation}\begin{aligned}
\leq \frac{1}{2} \biggl[ -\|\nabla L(\theta_t)\|^2 - \left\|\sum_{k=1}^K \dfrac{d_k}{q_cd} \nabla\theta_{t}^k \widehat{L}_k(\theta_{t}, x_{i}^k) \right\|^2 \\+ \sum_{k=1}^K \dfrac{d_k}{q_cd} \|\nabla\theta_{t}\widehat{L}_k(\theta_{t}, x_{i}^k) - \nabla\theta_{t}^k \widehat{L}_k(\theta_{t}, x_{i}^k)\|^2 \biggr].
\end{aligned} \end{equation} 

Since $L_k(\theta)$ is Lipschitz continuous, we have;

\begin{equation}\begin{aligned}
\frac{1}{2} \biggl[ -\|\nabla L(\theta_t)\|^2 - \left\|\sum_{k=1}^K \dfrac{d_k}{q_cd} \nabla\theta_{t}^k \widehat{L}_k(\theta_{t}, x_{i}^k) \right\|^2 \\+ \sum_{k=1}^K \dfrac{d_k}{q_cd} \|\nabla\theta_{t}\widehat{L}_k(\theta_{t}, x_{i}^k) - \nabla\theta_{t}^k \widehat{L}_k(\theta_{t}, x_{i}^k)\|^2 \biggr]
\\\leq \frac{1}{2} \biggl[ -\|\nabla L(\theta_t)\|^2 - \left\|\sum_{k=1}^K \dfrac{d_k}{q_cd} \nabla\theta_{t}^k \widehat{L}_k(\theta_{t}, x_{i}^k) \right\|^2 \\+ \sum_{k=1}^K \dfrac{d_k}{q_cd} M^2 \|\theta_t-\theta_t^k\|^2 \biggr].
\end{aligned} \end{equation} 

Therefore the first term of \ref{eq:100} can be bounded as. 

\begin{equation}
\label{eq:102}
    \begin{aligned}
        -\eta_t\mathbb{E}[\mathbb{E}_{k in K}[\langle\nabla L(\theta_t),\widehat{g}_t\rangle]] \\ \leq -\dfrac{\eta_t}{2}||\nabla L(\theta_t)||^2 - \dfrac{\eta_t}{2}||\sum_{k=1}^K \dfrac{d_k}{q_cd}\nabla\theta_{t}^k\widehat{L}_k(\theta_{t}, x_{i}^k)||^2\\ + \dfrac{\eta_tM^2}{2}\sum_{k=1}^K \dfrac{d_k}{q_cd}||\theta_t-\theta_t^k||^2. 
    \end{aligned}
\end{equation}

An immediate implication of the above equation is

\begin{equation*}
\begin{aligned}
    -\eta_t \mathbb{E} \left[ \mathbb{E}_{k in K} \left[ \left\langle \nabla L(\theta_t), \widehat{g}^{(t)} \right\rangle \right] \right] \\\leq -\frac{\eta_t}{2} \left\| \nabla L(\theta_t) \right\|^2 - \frac{\eta_t}{2} \left\| \sum_{k=1}^{K} \dfrac{d_k}{q_cd} \nabla\theta_{t}^k\widehat{L}_k(\theta_{t}, x_{i}^k) \right\|^2 \\+ \frac{\eta_t M^2}{2} \sum_{k=1}^{K} \dfrac{d_k}{q_cd} \left\| \theta_t - \theta_t^k \right\|^2 
\end{aligned}
\end{equation*}

\begin{equation}
\label{eq:103}
\begin{aligned}
    \leq -\mu \eta_t (L(\theta_t) - L(\theta^*)) - \frac{\eta_t}{2} \left\| \sum_{k=1}^{K} \dfrac{d_k}{q_cd} \nabla\theta_{t}^k\widehat{L}_k(\theta_{t}, x_{i}^k) \right\|^2 \\+\frac{\eta_t L^2}{2} \sum_{k=1}^{K} \dfrac{d_k}{q_cd} \left\| \theta_t - \theta_t^k \right\|^2,
\end{aligned}
\end{equation}
where the last inequality follows from the PL property.

We can also simplify the last term in RHS of \ref{eq:103}

Define \( t_c \triangleq \left\lfloor \frac{t}{|N|} \right\rfloor |N| \). Therefore, according to Algorithm \ref{dp-fed}, we have:

\begin{equation}
\theta_{t_c+1} = \frac{1}{K} \sum_{k=1}^K \theta^k_{t_c+1}
\end{equation}

The update rule of Algorithm \ref{dp-fed} can then be expressed as:

\begin{equation}
\begin{aligned}
\theta_t^k = \theta^k_{t-1} - \eta_{t-1} \widehat{g}^k_{t-1}
=\theta^k_{(t-2)} - \left(\eta_{t-2} \widehat{g}^k_{t-2} + \eta_{t-1} \widehat{g}^k_{(t-1)}\right)
\\= \theta_{t_c+1} - \sum_{j=t_c+1}^{t-1} \eta_j \widehat{g}^k_j,
\end{aligned}
\end{equation}

where the last equality follows from the update rule. Building on this, we now compute the average model as follows:

\begin{equation}
\theta_t = \theta_{t_c+1} - \frac{1}{K} \sum_{k=1}^K \sum_{j=t_c+1}^{t-1} \eta_j \widehat{g}_j^k
\end{equation}

Next, we aim to bound the term \( \mathbb{E}\|\theta_t - \theta_t^k\|^2 \) for \( t_c + 1 \leq t \leq t_c + E \) (where \( n \) represents the indices of local updates). We begin by relating this quantity to the variance between the stochastic gradient and the full gradient.

\begin{equation*}
\begin{aligned}
\mathbb{E}\left[\|\theta_{(t_c+n)} - \theta^k_{(t_c+n)}\|^2\right] \\= \mathbb{E}\left[\left\|\theta_{t_c+1} - \sum_{j=t_c+1}^{t-1} \eta_j \widehat{g}^k_j 
- \theta_{t_c+1} + \frac{1}{K} \sum_{k=1}^K \sum_{j=t_c+1}^{t-1} \eta_j \widehat{g}_j^{k}\right\|^2\right] \\
= \mathbb{E}\left[\left\|\sum_{j=1}^{n} \eta_{t_c+j} \widehat{g}^k_{t_c+j} - \frac{1}{K} \sum_{k=1}^K \sum_{j=1}^{n} \eta_{t_c+j} \widehat{g}_j^{t_c+j}\right\|^2\right] \\
\leq 2 \mathbb{E}\left[\left\|\sum_{j=1}^{n} \eta_{t_c+j} \widehat{g}^k_{t_c+j}\right\|^2\right] + 2 \mathbb{E}\left[\left\|\frac{1}{K} \sum_{k=1}^K \sum_{j=1}^{n} \eta_{t_c+j} \widehat{g}_j^{t_c+j}\right\|^2\right] \\
= 2 \left(\mathbb{E}\left[\left\|\sum_{j=1}^{n} \eta_{t_c+j} \widehat{g}^k_{t_c+j} - \mathbb{E}\left[\sum_{j=1}^{n} \eta_{t_c+j} \widehat{g}^k_{t_c+j}\right]\right\|^2\right] \right.\\
\quad + \left. \left\|\mathbb{E}\left[\sum_{j=1}^{n} \eta_{t_c+j} \widehat{g}^k_{t_c+j}\right]\right\|^2\right)
\quad + 2 \mathbb{E}\biggl[\biggl\|\frac{1}{K} \sum_{k=1}^K \sum_{j=1}^{n} \eta_{t_c+j} \widehat{g}_j^{t_c+j} \\
\end{aligned}
\end{equation*}

\begin{equation}
\begin{aligned}
- \mathbb{E}\left[\frac{1}{K} \sum_{k=1}^K \sum_{j=1}^{n} \eta_{t_c+j} \widehat{g}_j^{t_c+j}\right]\biggr\|^2\biggr] \\
\quad + \left\|\mathbb{E}\left[\frac{1}{K} \sum_{k=1}^K \sum_{j=1}^{n} \eta_{t_c+j} \widehat{g}_j^{t_c+j}\right]\right\|^2 \\
= 2 \mathbb{E}\left[\left\|\sum_{j=1}^{n} \eta_{t_c+j} \left(\widehat{g}^k_{t_c+j} - g^k_{t_c+j}\right)\right\|^2\right] + \left\|\sum_{j=1}^{n} \eta_{t_c+j} g^k_{t_c+j}\right\|^2 \\
\quad + 2 \mathbb{E}\left[\left\|\frac{1}{K} \sum_{k=1}^K \sum_{j=1}^{n} \eta_{t_c+j} \left(\widehat{g}_j^{t_c+j} - g_j^{t_c+j}\right)\right\|^2\right] \\
\quad + \left\|\frac{1}{K} \sum_{k=1}^K \sum_{j=1}^{n} \eta_{t_c+j} g_j^{t_c+j}\right\|^2,
\end{aligned}
\end{equation}

where the inequalities follow from the properties of convexity and unbiased estimation.

Given the Smoothness assumption and the i.i.d. sampling, we can express

\begin{equation}
\begin{aligned}
    \mathbb{E}\left[\|\theta_{(t_c+n)} - \theta^k_{(t_c+n)}\|^2\right] 
    \\\leq 2\mathbb{E}\Bigg(\biggl[\sum_{j=1}^{n} \eta^2_{t_c+j} \|g_{t_c+j}^k - g_{t_c+j}\|^2 \\+ \sum_{j \neq u \lor k \neq v} \left\langle \eta_j g_{t_c+j}^k - \eta_j g_{t_c+j}, \eta_u g_{t_c+u}^v - \eta_u g_{t_c+u} \right\rangle \\+ \biggl\| \sum_{j=1}^{n} \eta_{t_c+j} g_{t_c+j} \biggr\|^2\biggr] \nonumber
    + \frac{1}{K^2} \sum_{k \in K} \sum_{j=1}^{n} \eta^2_{t_c+j} \|g_{t_c+j}^j - g_{t_c+j}\|^2  \\+\dfrac{1}{K^2}\sum_{j \neq u \lor k \neq v} \left\langle \eta_j g_{t_c+j}^j - \eta_j g_{t_c+j}, \eta_u g_{t_c+u}^v - \eta_u g_{t_c+u} \right\rangle \\+ \left\| \frac{1}{K} \sum_{k \in K} \sum_{j=1}^{n} \eta_{t_c+j} g_{t_c+j}^j \right\|^2\Bigg) \nonumber  \\
    \leq 2\mathbb{E}\Bigg(\left[\sum_{j=1}^{n} \eta^2_{t_c+j} \|g_{t_c+j}^k - g_{t_c+j}\|^2 + n \sum_{j=1}^{n} \eta^2_{t_c+j} \|g_{t_c+j}\|^2 \right]\\+
    \frac{1}{K^2} \sum_{k \in K} \sum_{j=1}^{n} \eta^2_{t_c+j} \|g_{t_c+j}^j - g_{t_c+j}\|^2 \\+ \left\| \frac{1}{K} \sum_{k \in K} \sum_{j=1}^{n} \eta_{t_c+j} g_{t_c+j}^j \right\|^2\Bigg) \nonumber=\\
    2\mathbb{E}\Bigg(\left[\sum_{j=1}^{n} \eta^2_{t_c+j} \|g_{t_c+j}^k - g_{t_c+j}\|^2 + n \sum_{j=1}^{n} \eta^2_{t_c+j} \|g_{t_c+j}\|^2 \right] \nonumber\\
    + \frac{1}{K^2} \sum_{k \in K} \sum_{j=1}^{n} \|g_{t_c+j}^j - g_{t_c+j}\|^2 \\+ \frac{n}{K^2} \sum_{k \in K} \sum_{j=1}^{n} \eta^2_{t_c+j} g_{t_c+j}^j \|^2\Bigg) \nonumber=\\
    2\Bigg(\left[\sum_{j=1}^{n} \eta^2_{t_c+j} \mathbb{E} \|g_{t_c+j}^k - g_{t_c+j}\|^2 + n \sum_{j=1}^{n} \eta^2_{t_c+j} \mathbb{E} \|g_{t_c+j}\|^2 \right] \nonumber\\
    + \frac{1}{K^2} \sum_{k \in K} \sum_{j=1}^{n} \eta^2_{t_c+j} \mathbb{E} \|g_{t_c+j}^k - g_{t_c+j}\|^2 \\ + \frac{n}{K^2} \sum_{k \in K} \sum_{j=1}^{n} \eta^2_{t_c+j} \mathbb{E}\|g_{t_c+j}^j \|^2 \nonumber.
\end{aligned}
\end{equation}

Now using assumption 1;

\begin{equation*}
\begin{aligned}
\mathbb{E}\left[\|\theta_t - \theta^k_{t}\|^2\right] \leq 2 \biggl( \biggl[ \sum_{j=1}^{n} \eta_{t_c+j}^2 \left( B_1 \| g^k_{(t_c+j)} \|^2 + \frac{B_2^2}{K} \right)\\+ n \sum_{j=1}^{n} \eta_{t_c+j}^2 \| g^k_{(t_c+j)} \|^2 \biggr] +
\end{aligned}
\end{equation*}

\begin{equation}
\begin{aligned}
 \frac{1}{K^2} \sum_{k \in K} \sum_{j=1}^{n} \eta_{t_c+j}^2 \biggl( B_1 \| g_j^{(t_c+j)} \|^2 + \frac{B_2^2}{K} \biggr) \\+ \frac{n}{K^2} \sum_{k \in K} \sum_{j=1}^{n} \eta_{t_c+j}^2 \| g_j^{(t_c+j)} \|^2 \biggr)
\\=2 \biggl( \biggl[ \sum_{j=1}^{n} \eta_{t_c+j}^2 B_1 \| g^k_{(t_c+j)} \|^2 + \sum_{j=1}^{n}\eta_{t_c+j}^2\frac{B_2^2}{K}  +\\ n \sum_{j=1}^{n} \eta_{t_c+j}^2 \| g^k_{(t_c+j)} \|^2 \biggr] + \frac{1}{K^2} \sum_{k \in K} \sum_{j=1}^{n} \eta_{t_c+j}^2 B_1 \| g_j^{(t_c+j)} \|^2 \\+ \sum_{j=1}^{n}\eta_{t_c+j}^2\frac{B_2^2}{K^2} \biggr) + \frac{n}{K^2} \sum_{k \in K} \sum_{j=1}^{n} \eta_{t_c+j}^2 \| g_j^{(t_c+j)} \|^2 \biggr).
\end{aligned}
\end{equation}

Now taking summation over sampled clients, we obtain:

\begin{equation}\begin{aligned}
\mathbb{E} \sum_{k \in K} \|\theta_t - \theta^k_{t}\|^2 \leq 2 \biggl( \biggl[ \sum_{k \in K}\sum_{j=1}^{n} \eta_{t_c+j}^2 B_1 \| g^k_{(t_c+j)} \|^2 \\+ \sum_{j=1}^{n}\eta_{t_c+j}^2\frac{B_2^2}{K} + n \sum_{k \in K}\sum_{j=1}^{n} \eta_{t_c+j}^2 \| g^k_{(t_c+j)} \|^2 \biggr] \\+ \frac{1}{K} \sum_{k \in K} \sum_{j=1}^{n} \eta_{t_c+j}^2 B_1 \| g_j^{(t_c+j)} \|^2 \\+ \sum_{j=1}^{n}\eta_{t_c+j}^2\frac{B_2^2}{K^2} \biggr) + \frac{n}{K} \sum_{k \in K} \sum_{j=1}^{n} \eta_{t_c+j}^2 \| g_j^{(t_c+j)} \|^2 \biggr)
\\= 2\Bigg((\dfrac{K+1}{K})\sum_{k \in K}\sum_{j=1}^{n} \eta_{t_c+j}^2 B_1 \| g^k_{(t_c+j)} \|^2 \\+\sum_{j=1}^{n}\eta_{t_c+j}^2\frac{(K+1)B_2^2}{K} \\+n(\dfrac{K+1}{K})\sum_{k \in K}\sum_{j=1}^{n} \eta_{t_c+j}^2\| g_j^{(t_c+j)} \|^2
\\=2\Bigg( \left[(\dfrac{K+1}{K})(B_1+n)\right] 
\sum_{k \in K}\sum_{j=1}^{n} \eta_{t_c+j}^2\|
g_j^{(t_c+j)} \|^2\\+\sum_{j=1}^{n}\eta_{t_c+j}^2\dfrac{(K+1)B_2^2}{K}\Bigg)
\\\leq 2\Bigg(\dfrac{K+1}{K}\Bigg)\Bigg( \left[ B_1+N\right] (\sum_{k=t_c+1}^{t-2}\sum_{k \in K} 
\eta_k^2\| g_j^{(k)} \|^2 ) \\+\sum_{k=t_c+1}^{t-1}\eta_k^2\frac{B_2^2}{K}\Bigg).
\end{aligned}
\end{equation}

Here we note $n \leq |N|$

Finally,

\begin{equation*}
\begin{aligned}
\sum_{k\in K} \|\theta_t - \theta^k_{t}\|^2 \leq 
\end{aligned}
\end{equation*}

\begin{equation}
\begin{aligned}
2\Bigg(\dfrac{K+1}{K}\Bigg)\Bigg(\left[ B_1+N\right]\sum_{k=t_c+1}^{t-1}\eta_k^2\sum{k \in K}\|\nabla L_k(\theta_j^k) \|^2 \\+\sum_{k=t_c+1}^{t-1}\eta_k^2\frac{B_2^2}{K}\Bigg).
\end{aligned}
\end{equation}

Now using the upper bound over the weighted gradient diversity we obtain;

\begin{equation}
\label{eq:104}
    \begin{aligned}
        \mathbb{E}[\sum_{k=1}^K \dfrac{d_k}{q_cd}||\theta_t-\theta_t^k||^2] \\\leq- 2(\dfrac{K+1}{K})([B_1+N]\sum^{t-1}_{j=t_c+1}\eta_j^2\sum_{k=1}^K \dfrac{d_k}{q_cd}||\nabla\theta_{t}^k\widehat{L}_k(\theta_{t}, x_{i}^k)||^2\\+\sum^{t-1}_{j=t_c+1}\dfrac{\eta_j^2 B_2^2}{||L_n||})   \\   \leq 2(\dfrac{K+1}{K})(\lambda[B_1+N]\sum^{t-1}_{j=t_c+1}\eta_j^2||\sum_{k=1}^K \dfrac{d_k}{q_cd}\nabla\theta_{t}^k\widehat{L}_k(\theta_{t}, x_{i}^k)||^2\\+\sum^{t-1}_{j=t_c+1}\dfrac{\eta_j^2 B_2^2}{K}).
    \end{aligned}
\end{equation}

Where, $t_c = [\dfrac{t}{N}]N$.

by substituting \ref{eq:101},\ref{eq:102} and \ref{eq:104} into \ref{eq:100} we get

\begin{equation}
\label{eq:105}
    \begin{aligned}
        \mathbb{E}[L(\theta_{t+1})] - L(\theta^*) \leq (1-\mu\eta_t)\mathbb{E}[L(\theta_t)-L(\theta^*)]\\+\dfrac{M\eta_t^2B_2}{2K}+\dfrac{\eta_tM^2}{K}(\sum^{t-1}_{j=t_c+1}\eta_j^2\dfrac{(K+1)B_2^2}{K}) \\ \dfrac{\eta_t}{2}[-1+\dfrac{M\lambda\eta_t(B_1+K)}{K}]||\sum_{k=1}^K \dfrac{d_k}{q_cd}\nabla\theta_{t}^k\widehat{L}_k(\theta_{t}, x_{i}^k)||^2 \\ + \dfrac{\eta_tM^2(K+1)}{K^2}[\lambda(B_1+N)\sum^{t-1}_{j=t_c+1}\eta_j^2||\sum_{k=1}^K \dfrac{d_k}{q_cd}\nabla\theta_{t}^k\widehat{L}_k(\theta_{t}, x_{i}^k)||^2].
    \end{aligned}
\end{equation}

Let, 

\begin{equation}
    \triangle_t = 1-\mu\eta_t,
\end{equation}
\begin{equation}
    c_t=\dfrac{\eta_tMB_2^2}{K}[\dfrac{\eta_t}{2}+\dfrac{M(K+1)}{K}\sum^{t-1}_{j=t_c+1}\eta_j^2,
\end{equation}
\begin{equation}
    B_t = \dfrac{\lambda(K+1)\eta_tM^2}{K^2}(B_1+N).
\end{equation}

Now we can simplify \ref{eq:104} as 

\begin{equation}
\label{eq:105}
    \begin{aligned}
        \mathbb{E}[L(\theta_{t+1})] - L(\theta^*) \leq \triangle_t\mathbb{E}[L(\theta_t)-L(\theta^*)]+c_t\\+\dfrac{\eta_t}{2}[-1+\lambda M\eta_t(\dfrac{B_1+K}{K})]||\sum^K_{k=1}\dfrac{d_k}{q_cd}\nabla\theta_{t}^k\widehat{L}_k(\theta_{t}, x_{i}^k)||^2 \\+ B_t\sum^{t-1}_{j=t_c+1}\eta_j^2||\sum^K_{k=1}\dfrac{d_k}{q_cd}\nabla\theta_{t}^k\widehat{L}_k(\theta_{t}, x_{i}^k)||^2.
    \end{aligned}
\end{equation}

\end{proof}

\end{document}